\newtheorem{theorem}{Theorem}
\newtheorem{proposition}{Proposition}
\newtheorem{definition}{Definition}
\newtheorem{example}{Example}
\numberwithin{equation}{section}
\newcommand*{\rbr}[1]{\left(#1\right)}
\newcommand*{\cbr}[1]{\left\{#1\right\}}
\newcommand*{\sbr}[1]{\left[#1\right]}
\newcommand*{\abr}[1]{\left|#1\right|}
\newcommand*{\trbr}[1]{(#1)}
\newcommand*{\tcbr}[1]{\{#1\}}
\DeclareMathOperator{\Binomial}{Binomial}
\DeclareMathOperator{\Uniform}{Uniform}
\newcommand*{\R}{\mathbb{R}}
\newcommand*{\X}{\mathbb{X}}
\newcommand*{\Y}{\mathbb{Y}}
\newcommand*{\eps}{\epsilon}
\newcommand*{\alg}{\mathcal{A}}
\newcommand*{\talgof}[1]{\mathcal{A}[#1]}
\newcommand*{\algof}[1]{\mathcal{A}\left[#1\right]}
\newcommand*{\dgp}{P}
\newcommand*{\dataset}{\mathcal{D}}
\newcommand*{\muh}{\widehat{\mu}}
\newcommand*{\mut}{\widetilde{\mu}}
\newcommand*{\algspace}{\mathfrak{A}}
\newcommand*{\datasetspace}{\mathfrak{D}}
\newcommand*{\modelspace}{\mathcal{M}_d}
\newcommand*{\modelclass}{\mathcal{M}_{d,*}}
\newcommand*{\test}{T}
\newcommand*{\Yhat}{\widehat{\mathcal{Y}}}
\newcommand*{\Ytilde}{\widetilde{\mathcal{Y}}}
\newcommand*{\Xcal}{\mathcal{X}}
\newcommand*{\Ycal}{\mathcal{Y}}
\newcommand*{\iidsim}{\stackrel{\textnormal{iid}}{\sim}}
\newcommand*{\tEE}[1]{\mathbb{E}[{#1}]}
\newcommand*{\EEst}[2]{\mathbb{E}\left[{#1}\ \middle| \ {#2}\right]}
\newcommand*{\tPP}[1]{\mathbb{P}\{{#1}\}}
\newcommand*{\PP}[1]{\mathbb{P}\left\{{#1}\right\}}
\newcommand*{\Pp}[2]{\mathbb{P}_{{#1}}\left\{{#2}\right\}}
\newcommand*{\PPst}[2]{\mathbb{P}\left\{{#1}\ \middle| \ {#2}\right\}}
\newcommand*{\tOne}[1]{{\mathbbm{1}}\{{#1}\}}
\newcommand*{\One}[1]{{\mathbbm{1}}\left\{{#1}\right\}}
\newcommand*{\indep}{%
  \mathbin{%
    \mathpalette{\@indep}{}%
  }%
}
\newcommand*{\nindep}{%
  \mathbin{%                   % The final symbol is a binary math operator
    \mathpalette{\@indep}{\not}% \mathpalette helps for the adaptation
                               % of the symbol to the different math styles.
  }%
}
\newcommand*{\@indep}[2]{%
  % #1: math style
  % #2: empty or \not
  \sbox0{$#1\perp\m@th$}%        box 0 contains \perp symbol
  \sbox2{$#1=$}%                 box 2 for the height of =
  \sbox4{$#1\vcenter{}$}%        box 4 for the height of the math axis
  \rlap{\copy0}%                 first \perp
  \dimen@=\dimexpr\ht2-\ht4-.2pt\relax
      % The equals symbol is centered around the math axis.
      % The following equations are used to calculate the
      % right shift of the second \perp:
      % [1] ht(equals) - ht(math_axis) = line_width + 0.5 gap
      % [2] right_shift(second_perp) = line_width + gap
      % The line width is approximated by the default line width of 0.4pt
  \kern\dimen@
  {#2}%
      % {\not} in case of \nindep;
      % the braces convert the relational symbol \not to an ordinary
      % math object without additional horizontal spacing.
  \kern\dimen@
  \copy0 %                       second \perp
} 
\begin{document}
\title{Black-box tests for algorithmic stability}
\date{\today}
\author[1, 2]{Byol Kim\thanks{byolkim@uw.edu}}
\author[3]{Rina Foygel Barber\thanks{rina@uchicago.edu}}
\affil[1]{\footnotesize Department of Biostatistics, University of Washington, Seattle, WA, 98195, USA}
\affil[2]{\footnotesize eScience Institute, University of Washington, Seattle, WA, 98195, USA}
\affil[3]{\footnotesize Department of Statistics, University of Chicago, Chicago, IL, 60637, USA}

\maketitle

\begin{abstract}
Algorithmic stability is a concept from learning theory that expresses the degree to which changes to the input data (e.g., removal of a single data point) may affect the outputs of a regression algorithm. Knowing an algorithm's stability properties is often useful for many downstream applications---for example, stability is known to lead to desirable generalization properties and predictive inference guarantees. However, many modern algorithms currently used in practice are too complex for a theoretical analysis of their stability properties, and thus we can only attempt to establish these properties through an empirical exploration of the algorithm's behavior on various datasets. In this work, we lay out a formal statistical framework for this kind of {\em black-box testing} without any assumptions on the algorithm or the data distribution, and establish fundamental bounds on the ability of any black-box test to identify algorithmic stability.
\end{abstract}

\section{Introduction}

Consider an algorithm $\alg$ that maps datasets to fitted regression functions:
\[
	\alg : \ \underbrace{(X_1,Y_1), \dots, (X_n,Y_n)}_{\textnormal{Training data points in $\R^d\times\R$}} \ \ \mapsto \ \ \underbrace{\muh: \R^d\rightarrow \R}_{\textnormal{Fitted function}}.
\]
In many practical settings where regression algorithms are used, the distribution $\dgp$ of the data points $(X_i,Y_i)$ is potentially complex and cannot be assumed to satisfy strong simplifying conditions (such as a parametric model or a high degree of smoothness). Guarantees bounding the predictive error of the fitted model $\muh$ can still be obtained by testing its predictive accuracy on additional labeled data that was not used for training (i.e., a holdout set). Of course, in settings where the available training data is limited, we may not want to sacrifice sample size in order to obtain this independent holdout set---that is, if $n$ is the number of available labeled data points drawn from the unknown data distribution $\dgp$, we may want to run the regression algorithm $\alg$ on the entire set of $n$ data points. In this type of setting, where no holdout set is available, it is nonetheless possible to give guarantees on predictive error by assuming a condition of {\em stability} on the algorithm $\alg$ \citep{Bousquet2002Stability, Steinberger2020Conditional}. The stability condition essentially requires that slight perturbations of the training dataset $\big((X_i,Y_i)\big)_{i = 1,\dots,n}$, such as removing or replacing a single data point, can alter the fitted model $\muh$ only slightly. 

Unfortunately, verifying the stability of a given regression algorithm is itself challenging. Some algorithms are known to satisfy stability by construction---for example, if $\alg$ is a $k$-nearest neighbor algorithm, then removing a single data point in the training set will only perturb predictions locally near that point. However, in many settings, particularly with modern machine learning algorithms, it is not feasible to analyze an algorithm's stability properties theoretically, since the algorithm is too complex for this to be tractable. Instead, we can only examine its properties through empirical testing, to try to determine based on its output whether it appears to be stable or unstable. We may ask, therefore, whether stability of an algorithm can instead be validated empirically, to ensure that the algorithms used in practice also enjoy the theoretical guarantees that accompany the stability property.

In this work, our aim is to examine the question of whether it is possible to infer the stability of an algorithm through {\em black-box testing} that does not peer inside the construction of the algorithm but instead tries to determine the stability property by observing the empirical behavior of the algorithm on various datasets. In particular, we will establish fundamental bounds on our ability to test for the stability property in the absence of assumptions on the distribution $\dgp$ or the algorithm $\alg$.

\section{Background and framework} \label{sec:background}

We define an {\em algorithm} $\alg$ as any (possibly randomized) function taking any finite collection of $(x,y)$ pairs, where $x\in\R^d$ is a feature vector and $y\in\R$ is a response variable,\footnote{Throughout, we assume a feature space $\R^d$ and a response space $\R$ for clarity of the exposition, but our results hold more generally---we will comment more on this in Section~\ref{sec:discussion_uncountable}.} to a regression function $\muh$ mapping a feature vector $x\in\R^d$ to some predicted $y\in\R$. Formally, as a map between spaces, $\alg$ can be written as
\[
	\alg: \rbr{\bigcup_{n \geq 0} \rbr{\R^d \times \R}^n} \times [0,1] \to \modelspace,
\]
where $\modelspace = \{\textnormal{measurable functions}\ \mu: \R^d \to \R\}$. Given $n$ points $(x_1,y_1), \dots, (x_n,y_n)$, the fitted regression function $\muh$ obtained from $\alg$ is
\[
	\muh = \algof{(x_1,y_1), \dots, (x_n,y_n); \xi}, \quad \xi \sim \Uniform [0,1].
\]
Here, the role of the last argument $\xi$ is to encode the random behavior of $\alg$, if needed. For example, if $\alg$ uses stochastic gradient descent, $\xi$ determines the random sequence of data points seen by the algorithm. Effectively, we can think of $\xi$ as the random seed with which we initialize the procedure. On the other hand, if $\alg$ is a deterministic method, then the argument $\xi$ can simply be ignored. Thus, our notation unifies the deterministic and randomized algorithm settings.

We will assume a measurability condition on $\alg$---namely, we assume measurability of the function \[\big((x_1,y_1),\dots,(x_n,y_n),\xi,x \big)\mapsto \algof{(x_1,y_1), \dots, (x_n,y_n); \xi}(x),\] which runs $\alg$ on a dataset $(x_1,y_1),\dots,(x_n,y_n)$ with randomization term $\xi$ and then evaluates the resulting fitted function at $x$. We will also assume that $\alg$ is {\em symmetric}, i.e., given any training dataset $(x_1,y_1), \dots, (x_n,y_n)$ and any permutation $\sigma$ on $\{1,\dots,n\}$, it holds that
\[
	\algof{(x_1,y_1), \dots, (x_n,y_n); \xi} =\algof{(x_{\sigma(1)},y_{\sigma(1)}), \dots, (x_{\sigma(n)},y_{\sigma(n)}); \xi'},
\]
for some coupling of the randomization terms $\xi,\xi'\sim\Uniform[0,1]$. For example, this property holds for stochastic gradient descent, since the data indices are sampled uniformly at random. From this point on, all algorithms will be assumed to satisfy this measurability condition and symmetry property, without comment.

\subsection{Algorithmic stability}

In a typical statistical learning problem, an algorithm $\alg$ is applied to training data that is assumed to arise from some unknown random process. In this work, we focus on the independent and identically distributed (i.i.d.) setting, i.e.,
\[
	(X_i,Y_i) \iidsim \dgp, \quad i = 1, 2, \dots.
\]
We are interested in the stability of the predictions made by a model obtained by applying $\alg$ to a training dataset from $\dgp$. That is to say, we would like to establish that slight perturbations to the training dataset are not likely to substantially affect the resulting fitted function $\muh$. The following definition captures this notion of stability:

\begin{definition} \label{def:stability}
Let $\alg$ be a symmetric algorithm. Let $\eps \geq 0$ and $\delta \in [0,1)$. We say that $\alg$ is {\em $(\eps,\delta)$-stable} with respect to training datasets of size $n$ from a data distribution $\dgp$---or, for short, the triple $(\alg,\dgp,n)$ is $(\eps,\delta)$-stable---if
\begin{equation} \label{eq:stability}
	\PP{\abr{\muh_n(X_{n+1}) - \muh_{n-1}(X_{n+1})} > \eps} \leq \delta,
\end{equation}
where $\muh_n$ and $\muh_{n-1}$ are the fitted models obtained from the full training dataset and the dataset after removing the last data point, i.e.,
\[
	\muh_n = \algof{(X_1,Y_1), \dots, (X_n,Y_n); \xi}, \quad
	\muh_{n-1} = \algof{(X_1,Y_1), \dots, (X_{n-1},Y_{n-1}); \xi},
\]
where the data is distributed as $(X_i,Y_i) \iidsim \dgp$ and $\xi \sim \Uniform[0,1]$ is drawn independently of the data.
\end{definition}

Similarly to \citet{Elisseeff2005Stability}, we define stability by comparing the outputs of $\alg$ while fixing $\xi$ at the {\em same} value, i.e., both $\muh_n$ and $\muh_{n-1}$ are fitted using the same value $\xi$. Alternatively, we may want to define stability using a pair of {\em independent} $\xi, \xi'$, so that in~\eqref{eq:stability}, $\muh_n$ uses $\xi$ and $\muh_{n-1}$ uses $\xi'$ (e.g., the two calls to $\alg$ are initialized with two different random seeds). We consider this alternative definition, and other extensions, in Appendix~\ref{app:general_stability}.

\subsubsection{Other notions of algorithmic stability}

The earliest works on algorithmic stability date back to at least the late 1970s \citep{Rogers1978Finite, Devroye1979a, Devroye1979b}. Since then, various notions of algorithmic stability have been proposed, all describing different ways an algorithm may exhibit ``continuity" as a map from the data space $\bigcup_{n \geq 0} (\R^d \times \R)^n$ to the model space $\modelspace$. Among many such competing definitions, some of the most influential in the fields of learning theory and statistics are due to \citet{Bousquet2002Stability}, where certain forms of algorithmic stability were shown to imply good generalization behavior (see also \citet{Kearns1999Algorithmic}).

The formulation of stability that we use in this paper (Definition~\ref{def:stability}) is very similar to the {\em hypothesis stability} condition of \citet{Bousquet2002Stability}, which instead required a bound on $\tEE{|\ell(\muh_n; (X_{n+1},Y_{n+1})) - \ell(\muh_{n-1}; (X_{n+1}, Y_{n+1}))|}$ for a loss function $\ell$. Definition~\ref{def:stability} has appeared in \citet{Barber2021Predictive} under the name of {\em out-of-sample stability}, where it was shown that this condition is sufficient for predictive coverage guarantees of the jackknife (i.e., leave-one-out cross-validation) and jackknife+ methods. Earlier work by \citet{Steinberger2020Conditional} also established coverage properties for the jackknife under a similar stability property.

Many other works that also build on \citet{Bousquet2002Stability} have proposed alternative formulations of stability. \citet{Elisseeff2005Stability} extended \citet{Bousquet2002Stability}'s definitions and results to the setting of randomized algorithms, making their framework the closest to ours in the existing literature. \citet{Kutin2002a, Kutin2002b, Mukherjee2006Learning, ShalevShwartz2010Learnability} also proposed further relaxations and variants of \citet{Bousquet2002Stability}'s definitions in the context of characterizing sufficient and necessary conditions for learning problems. Interestingly, \citet[Section 6.2]{ShalevShwartz2010Learnability} give a generic but infeasible meta-algorithm in which stability plays the role of regularization, which they use to establish that a statistical problem is asymptotically {\em learnable} only if it can be learned with a stable algorithm. More recently, \citet{Deng2021Toward} advocated a functional description of algorithmic stability. It must be noted that only some of the definitions considered in these works define algorithmic stability as a property that holds jointly for an algorithm $\alg$ and a data distribution $\dgp$. For example, in \citet{Bousquet2002Stability}'s work, while {\em hypothesis stability} is defined for each algorithm-distribution pair, {\em uniform stability} is instead required to hold uniformly over all possible datasets and hence, is unsuitable as a target for statistical inference.

\subsection{Hypothesis testing framework}

The focus of this work is the following statistical inference question: given a fixed $\eps \geq 0$ and $\delta \in [0,1)$, we would like to test whether $(\alg,\dgp,n)$ is $(\eps,\delta)$-stable, with some desired bound $\alpha\in(0,1)$ on the test's error level. Writing $\dataset_\ell$ and $\dataset_u$ to denote the available labeled and unlabeled data, respectively, we will write $\widehat{\test}_{\eps,\delta} = \widehat{\test}_{\eps,\delta}(\alg,\dataset_\ell,\dataset_u) \in \{0,1\}$ for the data-dependent outcome of such a hypothesis test, where $\widehat{\test}_{\eps,\delta} = 1$ indicates that we believe stability holds. Define the ``ground truth" as
\[
	\test^*_{\eps,\delta} = \test^*_{\eps,\delta}(\alg,\dgp,n) = \One{\textnormal{$(\alg,\dgp,n)$ is $(\eps,\delta)$-stable}}.
\]
Given this target, we seek a test $\widehat{\test}_{\eps,\delta}$ that satisfies the following notion of {\em assumption-free} validity:
\begin{equation} \label{eqn:valid_hat_T}
	\textnormal{For all $(\alg,\dgp,n)$, if $\test^*_{\eps,\delta}(\alg,\dgp,n) = 0$ then $\PP{\widehat{\test}_{\eps,\delta} = 1} \leq \alpha$.}
\end{equation}
In other words, our chance of falsely labeling a $(\alg,\dgp,n)$ triple as $(\eps,\delta)$-stable using this test is bounded by $\alpha$, uniformly over all $(\alg,\dgp,n)$.

\subsection{Black-box tests}

For some simple algorithms such as nearest neighbor methods or ridge regression, their stability properties have been analyzed and are hence known \citep{Devroye1979a, Devroye1979b, Bousquet2002Stability, Hardt2016Train, Steinberger2020Conditional, Nikolakakis2022Blackbox}. For many modern learning algorithms and datasets, however, stability cannot be determined the same way, as they are too complex for this to be a tractable problem. Such algorithms are rightly regarded as ``black boxes" in the sense that our only practical means of studying them is by observing their empirical behavior, i.e., by sending different datasets through an algorithm and taking note of the changes in the output. Often these outputted fitted functions $\muh$ will also be ``black boxes" and be only accessible by evaluating them at various test points.

This ``black-box'' setting is the framework we are interested in here. In this setting, for a test $\widehat{\test}_{\eps,\delta}$ to be admissible, the computations the test makes can only rely on observable outcomes. The situation closely resembles that of black-box testing in software development where the performance of a code is tested by pushing different inputs through the code and observing the results assuming no knowledge of the inner structure \citep{Myers2011Art}. We essentially perform the same task for algorithmic stability, but with the crucial difference that we also seek assumption-free validity.

To formalize this setting, we define a {\em black-box test} of algorithmic stability. Let $\datasetspace_\ell = \cup_{m \geq 1} (\R^d \times \R)^m$ denote the space of labeled datasets of any size, and let $\datasetspace_u = \cup_{m \geq 1} (\R^d)^m$ denote the space of unlabeled datasets of any size.

\begin{definition}[Black-box test] \label{def:black_box_test}
Consider any test $\widehat{\test}$ that takes as input an algorithm $\alg$, a labeled dataset $\dataset_\ell \in \datasetspace_\ell$, and an unlabeled dataset $\dataset_u \in \datasetspace_u$, and returns a (possibly randomized) binary output $\widehat{\test}(\alg,\dataset_\ell,\dataset_u) \in \{0,1\}$. Then, we say that $\widehat{\test}$ is a {\em black-box test} if for some measurable functions $f^{(1)}, f^{(2)}, \dots$ and $g$, it can be defined in the following way:
\begin{enumerate}

\item At the initial stage $r = 1$,
\begin{enumerate}

\item Generate a labeled dataset $\dataset_\ell^{(1)} \in \datasetspace_\ell$, an unlabeled dataset $\dataset_u^{(1)} \in \datasetspace_u$, and a randomization term $\xi^{(1)}$ as a (possibly randomized) function of the input datasets:
\[
	\rbr{\dataset_\ell^{(1)}, \dataset_u^{(1)}, \xi^{(1)}} = f^{(1)}\sbr{\dataset_\ell, \dataset_u, \zeta^{(1)}},
\]
where $\zeta^{(1)} \sim \Uniform [0,1]$.

\item Fit a model and evaluate it using the generated data $(\dataset_\ell^{(1)}, \dataset_u^{(1)})$:
\[
 	\muh^{(1)} = \algof{\dataset_\ell^{(1)}; \xi^{(1)}}, \quad \Yhat^{(1)} = \muh^{(1)}\Big[\dataset_u^{(1)}\Big],
\]
where the operation of $\muh^{(1)}$ on $\dataset_u^{(1)}$ is understood in the pointwise sense.\footnote{For example, if $\dataset_u^{(1)}$ contains two unlabeled points $x_1$ and $x_2$, then $\muh^{(1)}[\dataset_u^{(1)}] = (\muh^{(1)}(x_1),\muh^{(1)}(x_2))$.}

\end{enumerate}

\item At each stage $r = 2, 3, \dots$,
\begin{enumerate}

\item Generate a labeled dataset $\dataset_\ell^{(r)} \in \datasetspace_\ell$, an unlabeled dataset $\dataset_u^{(r)} \in \datasetspace_u$, and a randomization term $\xi^{(r)}$ as a (possibly randomized) function of all the datasets and predictions observed so far:
\begin{multline*}
	\rbr{\dataset_\ell^{(r)}, \dataset_u^{(r)}, \xi^{(r)}}\\
	= f^{(r)}\sbr{\dataset_\ell, \dataset_u, \Big(\dataset_\ell^{(s)}\Big)_{s = 1}^{r-1}, \Big(\dataset_u^{(s)}\Big)_{s = 1}^{r-1}, \rbr{\Yhat^{(s)}}_{s = 1}^{r-1}, \rbr{\zeta^{(s)}}_{s = 1}^{r- 1},\rbr{\xi^{(s)}}_{s = 1}^{r-1}, \zeta^{(r)}},
\end{multline*}
where $\zeta^{(r)} \sim \Uniform [0,1]$.

\item Fit a model and evaluate it using the generated data $(\dataset_\ell^{(r)}, \dataset_u^{(r)})$:
\[
 	\muh^{(r)} = \algof{\dataset_\ell^{(r)}; \xi^{(r)}}, \quad \Yhat^{(r)} = \muh^{(r)}\Big[\dataset_u^{(r)}\Big],
\]
where the operation of $\muh^{(r)}$ on $\dataset_u^{(r)}$ is understood in the pointwise sense.

\end{enumerate}

\item Compute the output $\widehat{\test} \in \{0,1\}$ as a (possibly randomized) function of the sequence of generated datasets and observed predictions:
\[
	\widehat{\test} = g\sbr{\dataset_\ell, \dataset_u, \Big(\dataset_\ell^{(r)}\Big)_{r \geq 1}, \Big(\dataset_u^{(r)}\Big)_{r \geq 1}, \rbr{\Yhat^{(r)}}_{r \geq 1}, \rbr{\zeta^{(r)}}_{r \geq 1}, \rbr{\xi^{(r)}}_{r \geq 1}, \zeta},
\]
where $\zeta \sim \Uniform [0,1]$.

\end{enumerate}
\end{definition}
\noindent In the above, the role of the $\Uniform [0,1]$ random variables $\zeta^{(1)}, \zeta^{(2)}, \dots, \zeta \iidsim \Uniform[0,1]$ is to allow for randomization at each step of the procedure, if desired---for instance, to allow for drawing random subsamples of the data. 

\subsubsection{Examples of black-box tests}

To make the notion of a black-box test concrete, we give several examples of procedures we might use in practice to estimate an algorithm's stability. Throughout, we will write $\dataset_\ell = \big((X_1,Y_1), \dots,(X_{N_\ell},Y_{N_\ell})\big)$ to denote the available labeled data and $\dataset_u = \big(X_{N_\ell+1}, \dots, X_{N_\ell+N_u}\big)$ for the available unlabeled data.

First, we can consider splitting the available data and measuring stability on each split.

\begin{example}[The Binomial test] \label{example:Binomial_test}
Define 
\begin{equation} \label{eqn:define_kappa}
	\kappa = \kappa(n, N_\ell, N_u) = \min \cbr{\frac{N_\ell}{n}, \frac{N_\ell + N_u}{n+1}} .
\end{equation}
Then, $\lfloor\kappa\rfloor$ is the largest number of copies of independent datasets---each consisting of $n$ labeled training points and one unlabeled test point---that can be constructed from $\dataset_\ell$ and $\dataset_u$.

\begin{enumerate}

\item For $k = 1, \dots, \lfloor\kappa\rfloor$, construct the $k$-th dataset with $n$ labeled data points
\[
	(X_{(k-1) n + 1},Y_{(k-1) n + 1}), \dots, (X_{kn},Y_{kn})
\]
and one unlabeled data point $X_{\lfloor\kappa\rfloor n + k}$.

\item For each $k = 1, \dots, \lfloor\kappa\rfloor$, train models
\begin{gather*}
	\muh^{(k)}_n = \algof{(X_{(k-1) n + 1},Y_{(k-1) n + 1}), \dots, (X_{kn},Y_{kn}); \xi^{(k)}},\\
	\muh^{(k)}_{n-1} = \algof{X_{(k-1) n + 1},Y_{(k-1) n + 1}), \dots, (X_{kn - 1},Y_{kn - 1}); \xi^{(k)}}
\end{gather*}
with $\xi^{(k)} \iidsim \Uniform [0,1]$, and compute the difference in predictions \[\Delta^{(k)} = \abr{\muh^{(k)}_n\rbr{X_{\lfloor\kappa\rfloor n+k}}- \muh^{(k)}_{n-1}\rbr{X_{\lfloor\kappa\rfloor n+k}}}.\]

\item Compute how many of the $\Delta^{(k)}$'s exceed $\eps$, \[B = \sum_{k = 1}^{\lfloor\kappa\rfloor} \One{\Delta^{(k)} > \eps},\] and compare this test statistic against the $\Binomial(\lfloor\kappa\rfloor,\delta)$ distribution, returning $\widehat{\test}_{\eps,\delta}=1$ if $B$ is sufficiently small or $\widehat{\test}_{\eps,\delta}=0$ otherwise (precise details for the test are given in Section~\ref{sec:simple_test}).

\end{enumerate}
\end{example}

\noindent We will revisit this specific test in Section~\ref{sec:bounds} when we study some fundamental bounds.

Alternatively, instead of partitioning the available data, we can use bootstrapped samples to run the algorithm and generate the empirical differences $\Delta^{(k)}$.

\begin{example}[Bootstrapped sample method] \label{example:bootstrapped} \hfill

\begin{enumerate}

\item For $r = 1, \dots, R$, construct a bootstrapped training sample $(X^{(r)}_1,Y^{(r)}_1), \dots, (X^{(r)}_n,Y^{(r)}_n)$ by sampling with replacement from $\dataset_\ell$. Sample a test point $X^{(r)}_{n+1}$ from $\dataset_u$. 

\item As in Example~\ref{example:Binomial_test}, train models $\muh^{(r)}_n$ and $\muh^{(r)}_{n-1}$, and compute
 the difference in predictions
\[
	\Delta^{(r)} = \abr{\muh^{(r)}_n\rbr{X^{(r)}_{n+1}}- \muh^{(r)}_{n-1}\rbr{X^{(r)}_{n+1}}}.
\]

\item As in Example~\ref{example:Binomial_test}, compute how many of the $\Delta^{(r)}$'s exceed $\eps$,
\[
	\sum_{r = 1}^{R} \One{\Delta^{(r)} > \eps},
\] 
and return $\widehat{\test}_{\eps,\delta}=1$ if this value is sufficiently small or $\widehat{\test}_{\eps,\delta}=0$ otherwise.

\end{enumerate}
\end{example}

\noindent As another alternative, we may want to use the available data to estimate $\dgp$ first, and then test the algorithm with simulated data from the estimated distribution $\widehat{\dgp}$.

\begin{example}[Simulated sample method] \label{example:simulated} \hfill

\begin{enumerate}

\item Use $\dataset_\ell$ and $\dataset_u$ to obtain an estimate $\widehat{\dgp}$ of $\dgp$ via any method, then for $r = 1, \dots, R$, draw a simulated training set $(X^{(r)}_1,Y^{(r)}_1), \dots, (X^{(r)}_n,Y^{(r)}_n) \iidsim \widehat{\dgp}$ and a simulated test point $X^{(r)}_{n+1} \sim \widehat{\dgp}_X$, the marginal distribution of $X$ under $\widehat{\dgp}$.

\item Proceed as in Examples~\ref{example:Binomial_test} and~\ref{example:bootstrapped}.

\end{enumerate}
\end{example}

\noindent Furthermore, the black-box testing framework also allows us to use a testing procedure that learns from the earlier stages to improve the design of future queries. For instance, after running any of the above methods for some number of stages, a procedure may try to narrow down the properties of the training set and/or the test point that appear to lead to a lack of stability based on the observed outcomes up to that stage. In the subsequent stages, the dataset generation may be made to target these types of training and/or test data to learn more about the algorithm's stability properties before outputting the final answer $\widehat{\test}_{\eps,\delta} \in \{0,1\}$. Of course, these options represent only a few examples of the types of tests that we can define within the black-box framework of Definition~\ref{def:black_box_test}.

\subsubsection{Inadmissible tests}

Tests that require uncountably infinitely many executions of a black box are {\em not} admissible under Definition~\ref{def:black_box_test}. An example is a test $\widehat{\test}$ that relies on evaluating \[\sup_{y\in\R} \cbr{\algof{(X_1,Y_1), \dots, (X_{n-1},Y_{n-1}), (X_n,y); \xi}(X_{n+1})},\] since computing this supremum requires uncountably infinitely many calls to $\alg$. Similarly, evaluating \[\sup_{x\in\R^d} \left|\muh_n(x) - \muh_{n-1}(x)\right|,\] for $\muh_n = \talgof{(X_1,Y_1), \dots, (X_n,Y_n); \xi}$ and $\muh_{n-1} = \talgof{(X_1,Y_1), \dots, (X_n,Y_n); \xi}$, is also not allowed in general, as it requires uncountably infinitely many evaluations of the fitted regression functions $\muh_n$ and $\muh_{n-1}$.

\section{Limits of black-box testing} \label{sec:bounds}

In this section, we will study the hardness of testing stability in the black-box setting. To do so, we will first study the Binomial test defined in Example~\ref{example:Binomial_test}---we will see that this black-box test achieves valid inference regardless of the algorithm $\alg$ and distribution $\dgp$, but has low power (as expected, based on its inefficient use of the data). We will then prove that the performance of this simple procedure is in fact essentially optimal among all black-box tests that satisfies assumption-free validity~\eqref{eqn:valid_hat_T}, establishing the hardness of this inference problem.

\subsection{The Binomial test} \label{sec:simple_test}

Recall the Binomial test given in Example~\ref{example:Binomial_test}. This test is based on the empirical count \[B = \sum_{k = 1}^{\lfloor\kappa\rfloor} \One{\Delta^{(k)} > \eps},\] which is a $\Binomial$ random variable by construction (here $\kappa$ is defined as in~\eqref{eqn:define_kappa}). Since the empirical proportion $B / \lfloor\kappa\rfloor$ is an estimate of $\tPP{|\muh_n(X_{n+1}) - \muh_{n-1}(X_{n+1})|}$, we see that we should return $\widehat{\test}_{\eps,\delta} = 1$ if the empirical proportion $B / \lfloor\kappa\rfloor$ is sufficiently below $\delta$.

To make this precise, we first observe that since the $\Delta^{(k)}$'s are constructed on independent subsets of the data, we have $B \sim \Binomial(\lfloor\kappa\rfloor, \delta^*_\eps)$,\footnote{To include the $\lfloor\kappa\rfloor = 0$ case (i.e., $N_\ell < n$ or $N_\ell+N_u<n+1$), we abuse notation and interpret the $\Binomial(0, \delta)$ distribution as a point mass at zero.} where
\begin{equation} \label{eqn:define_delta_star}
	\delta^*_\eps = \PP{\abr{\muh_n(X_{n+1}) - \muh_{n-1}(X_{n+1})} > \eps}
\end{equation}
is the unknown true probability (that is, $\delta^*_\eps$ is the smallest possible value of $\delta'$ so that $(\alg,\dgp,n)$ is $(\eps,\delta')$-stable). Therefore, we simply need to perform a one-tailed Binomial test on $B$ to test whether $\delta^*_\eps \leq \delta$, in which case $\test^*_{\eps,\delta} = 1$, or $\delta^*_\eps > \delta$, in which case $\test^*_{\eps,\delta} = 0$. To perform this test, define $k^*_{\kappa,\delta} \in \{0, \dots, \lfloor\kappa\rfloor\}$ and $a^*_{\kappa,\delta} \in (0,1]$ as the unique values satisfying
\[
	\PP{\Binomial(\lfloor\kappa\rfloor,\delta) < k^*_{\kappa,\delta}} + a^*_{\kappa,\delta}\cdot \PP{\Binomial(\lfloor\kappa\rfloor,\delta) = k^*_{\kappa,\delta}} = \alpha,
\]
and then define
\begin{equation} \label{eqn:define_hat_T}
	\widehat{\test}_{\eps,\delta} = \begin{cases}
	1, & \text{ if } B < k^*_{\kappa,\delta},\\
	\One{\zeta \leq a^*_{\kappa,\delta}}, \text{where } \zeta \sim \Uniform [0,1], & \text{ if } B = k^*_{\kappa,\delta},\\
	0, & \text{ if } B > k^*_{\kappa,\delta}.
	\end{cases}
\end{equation}
\noindent Clearly, this test is a black-box test in the sense of Definition~\ref{def:black_box_test}. We now verify the assumption-free validity property~\eqref{eqn:valid_hat_T} to ensure bounded error for falsely declaring stability when an algorithm is {\em not} stable, and calculate this test's power to detect when an algorithm {\em is} stable.

\begin{theorem} \label{thm:simple_test}
Fix any parameters $\eps \geq 0$ and $\delta \in [0,1)$, any desired error level $\alpha \in (0,1)$, and any integers $n \geq 2$ and $N_\ell, N_u \geq 0$. Then, the black-box test $\widehat{\test}_{\eps,\delta}$ defined in~\eqref{eqn:define_hat_T} above satisfies assumption-free validity~\eqref{eqn:valid_hat_T} at level $\alpha$, that is, for any $(\alg,\dgp,n)$ that is not $(\eps,\delta)$-stable (i.e., $\test^*_{\eps,\delta} = 0$), it holds that \[\PP{\widehat{\test}_{\eps,\delta} = 1} \leq \alpha.\]
Moreover, for any $(\alg,\dgp,n)$ that is $(\eps,\delta)$-stable (i.e., $\test^*_{\eps,\delta} = 1$), if either $\delta^*_\eps = 0$ or $\delta \leq 1-\alpha^{1/\lfloor\kappa\rfloor}$ then the power of the test is given by
\begin{equation} \label{eqn:power_simple_test}
	\PP{\widehat{\test}_{\eps,\delta} = 1} = \cbr{\alpha \cdot \rbr{\frac{1-\delta^*_\eps}{1-\delta}}^{\lfloor\kappa\rfloor}} \wedge 1,
\end{equation}
where $\kappa=\kappa(n,N_\ell,N_u)$ is defined as in~\eqref{eqn:define_kappa} and $\delta^*_\eps$ is defined as in~\eqref{eqn:define_delta_star}.
\end{theorem}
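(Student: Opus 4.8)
The plan is to reduce the entire statement to elementary properties of the one-parameter power function of a randomized binomial test. Writing $m = \lfloor\kappa\rfloor$, the starting point is the fact already established above that $B \sim \Binomial(m, \delta^*_\eps)$, since the $m$ indicators $\One{\Delta^{(k)} > \eps}$ are i.i.d.\ $\mathrm{Bernoulli}(\delta^*_\eps)$ by construction on disjoint pieces of data. I would introduce the power function
\[
	\phi(p) = \PP{\Binomial(m,p) < k^*_{\kappa,\delta}} + a^*_{\kappa,\delta} \cdot \PP{\Binomial(m,p) = k^*_{\kappa,\delta}},
\]
so that, unpacking~\eqref{eqn:define_hat_T} and conditioning on the independent randomization $\zeta$, the rejection probability is exactly $\PP{\widehat{\test}_{\eps,\delta} = 1} = \phi(\delta^*_\eps)$, while the defining equation for the pair $(k^*_{\kappa,\delta}, a^*_{\kappa,\delta})$ reads precisely $\phi(\delta) = \alpha$. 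Everything then follows from analyzing $\phi$.

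For assumption-free validity I would show that $\phi$ is nonincreasing on $[0,1]$. The cleanest route is a monotone coupling: writing $\Binomial(m,p) = \sum_{i=1}^m \One{U_i \leq p}$ for $U_i \iidsim \Uniform[0,1]$, the count is pointwise nondecreasing in $p$, while the rejection weight $\psi(b) = \One{b < k^*_{\kappa,\delta}} + a^*_{\kappa,\delta}\One{b = k^*_{\kappa,\delta}}$ is nonincreasing in $b$; hence $\phi(p) = \EE{\psi(\Binomial(m,p))}$ is nonincreasing in $p$. Since $\test^*_{\eps,\delta} = 0$ means $\delta^*_\eps > \delta$, this immediately gives $\PP{\widehat{\test}_{\eps,\delta} = 1} = \phi(\delta^*_\eps) \leq \phi(\delta) = \alpha$.

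For the power I would pin down $(k^*_{\kappa,\delta}, a^*_{\kappa,\delta})$ under the stated hypotheses, using uniqueness of the solution of the defining equation (which holds since every atom $\PP{\Binomial(m,\delta)=j}$ is positive for $\delta\in(0,1)$). When $\delta \leq 1 - \alpha^{1/m}$, equivalently $(1-\delta)^m \geq \alpha$, the single atom $\PP{\Binomial(m,\delta)=0} = (1-\delta)^m$ already reaches $\alpha$, which forces $k^*_{\kappa,\delta} = 0$ and $a^*_{\kappa,\delta} = \alpha/(1-\delta)^m \in (0,1]$. Substituting into $\phi$ yields $\phi(\delta^*_\eps) = a^*_{\kappa,\delta}(1-\delta^*_\eps)^m = \alpha\big(\tfrac{1-\delta^*_\eps}{1-\delta}\big)^m$, and since this is a probability (equivalently since $a^*_{\kappa,\delta}\le 1$ and $(1-\delta^*_\eps)^m\le 1$) it is automatically at most $1$, so the truncation $\wedge 1$ is vacuous in this regime. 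The complementary case is $\delta^*_\eps = 0$ together with $\delta > 1 - \alpha^{1/m}$: here $(1-\delta)^m < \alpha$ forces $k^*_{\kappa,\delta} \geq 1$, while $\delta^*_\eps = 0$ makes $B = 0$ almost surely, so the test rejects with probability one, matching $\{\alpha/(1-\delta)^m\} \wedge 1 = 1$. These two cases together cover exactly the hypothesis ``$\delta^*_\eps = 0$ or $\delta \leq 1-\alpha^{1/m}$'' and produce the claimed formula~\eqref{eqn:power_simple_test}.

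The only genuinely delicate point is the bookkeeping in this last step: one must verify that the two conditions are precisely what guarantee $k^*_{\kappa,\delta} = 0$ (or collapse to the degenerate $B \equiv 0$ case), and must correctly track the boundary where $(1-\delta)^m < \alpha$ so that the $\wedge 1$ becomes active. The monotonicity and validity arguments are routine by comparison. I would also remark that the degenerate case $m = \lfloor\kappa\rfloor = 0$ is consistent under the stated conventions, since there $B$ is a point mass at $0$, forcing $k^*_{\kappa,\delta}=0$, $a^*_{\kappa,\delta}=\alpha$, and $\phi \equiv \alpha$, in agreement with~\eqref{eqn:power_simple_test}.
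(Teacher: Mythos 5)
Your proposal is correct and takes essentially the same approach as the paper's proof: both rest on $B \sim \Binomial(\lfloor\kappa\rfloor, \delta^*_\eps)$, establish validity via stochastic monotonicity of the binomial family in the success probability (the paper compares the CDFs directly where you use a monotone coupling, an equivalent argument), and compute the power through the identical two-case analysis pinning down $k^*_{\kappa,\delta}$ and $a^*_{\kappa,\delta}$. No gaps.
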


\noindent
These results are straightforward consequences of the fact that $B \sim \Binomial(\lfloor\kappa\rfloor, \delta^*_\eps)$ for any $(\alg,\dgp,n)$, i.e., the distribution of $B$ depends on $(\alg,\dgp,n)$ only through the parameters $\kappa$ and $\delta^*_\eps$. We give the full proof in Appendix~\ref{app:proof_thm:simple_test}.

To understand the setting in which the power calculation applies, the two special cases are exactly the settings where the outcome $\widehat{\test}_{\eps,\delta} = 1$ can occur only when $B = 0$, i.e., when we observe $\Delta^{(k)} \leq \eps$ for all $k = 1, \dots, \lfloor\kappa\rfloor$ datasets. In particular, the condition $\delta \leq 1-\alpha^{1/\lfloor\kappa\rfloor}$ is not very restrictive in practical scenarios. For example, taking $\delta = 0.1$ (i.e., a 10\% chance of instability) and $\alpha = 0.1$ (i.e., a 10\% error rate), this condition is satisfied for any $\kappa < 21$ (i.e., the amount of data available for testing is less than 21 times than the sample size $n$ we are interested in studying).  In particular, if $\kappa$ is fairly low (e.g., $\kappa=1$), then the power is not much higher than $\alpha$---that is,
power is not much better than random.

\subsection{A bound on power for all black-box tests with assumption-free validity}

The Binomial test defined in~\eqref{eqn:define_hat_T} above appears to be a very naive and inefficient proposal---in particular, we only use a single split of the available data into $\lfloor\kappa\rfloor$ datasets of size $n+1$ (i.e., $n$ labeled training points and one unlabeled test point), and the algorithm $\alg$ is only called $2 \lfloor\kappa\rfloor$ many times. We might expect to increase the power of the test by re-splitting or by resampling as in Example~\ref{example:bootstrapped}, or perhaps by creating new data through simulation or perturbation as in Example~\ref{example:simulated}, so that it may be possible to observe the algorithm's stability behavior on a wider range of different datasets.

However, our next result establishes that, despite its simplicity, the Binomial test proposed in~\eqref{eqn:define_hat_T} can in fact be optimal, in the sense that no universally valid black-box test can improve its power to detect that an algorithm is stable.

\begin{theorem} \label{thm:optimality}
Fix any parameters $\eps \geq 0$ and $\delta \in [0,1)$, any desired error level $\alpha \in (0,1)$, and any integers $n \geq 2$ and $N_\ell, N_u \geq 0$. Let $\widehat{\test}_{\eps,\delta}$ be any black-box test as in Definition~\ref{def:black_box_test} satisfying assumption-free validity~\eqref{eqn:valid_hat_T} at level $\alpha$. Then, for any $(\alg,\dgp,n)$ that is $(\eps,\delta)$-stable (i.e., $\test^*_{\eps,\delta} = 1$), the power of $\widehat{\test}_{\eps,\delta}$ is bounded as
\begin{equation} \label{eqn:power_optimal}
	\PP{\widehat{\test}_{\eps,\delta} = 1} \leq \cbr{\alpha \cdot \rbr{\frac{1-\delta^*_\eps}{1-\delta}}^{\kappa}} \wedge 1,
\end{equation}
where $\kappa=\kappa(n,N_\ell,N_u)$ is defined as in~\eqref{eqn:define_kappa} and $\delta^*_\eps$ is defined as in~\eqref{eqn:define_delta_star}.
\end{theorem}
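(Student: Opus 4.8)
The plan is to prove this impossibility bound by a change-of-measure (Le Cam--type) argument that plays the given stable instance against a carefully perturbed unstable instance, and then invokes assumption-free validity~\eqref{eqn:valid_hat_T} on the latter. Fix the $(\eps,\delta)$-stable triple $(\alg,\dgp,n)$ together with its instability level $\delta^*_\eps$ from~\eqref{eqn:define_delta_star}. Since the black-box test observes the raw data $\dataset_\ell,\dataset_u$ directly (through the functions $f^{(r)}$), any competing ``hard'' instance must share the same data distribution $\dgp$; otherwise the test could separate the two worlds from the observed data alone, before ever calling $\alg$. Consequently I would hold $\dgp$ fixed and build an unstable comparator $\alg'$ by \emph{perturbing} $\alg$: run $\alg$ exactly, but on a suitable collection of currently-stable leave-one-out configurations inject an artificial prediction gap exceeding $\eps$. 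Calibrating the injection so that a fresh draw from $\dgp$ sees instability with probability $\delta'=\delta+\eta$ for small $\eta>0$ makes $(\alg',\dgp,n)$ genuinely not $(\eps,\delta)$-stable, so validity gives $\Pp{\alg'}{\widehat{\test}_{\eps,\delta} = 1}\le\alpha$; I would then send $\eta\downarrow 0$ at the very end.

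The engine of the argument is a transcript-level coupling. I would run the black-box test on the two worlds using the same input data, the same auxiliary randomization $(\zeta^{(r)})$, and the same user-chosen seeds $(\xi^{(r)})$, so that the two executions produce an identical transcript up until the first moment a query lands on a configuration where $\alg'$ has injected instability (a ``flip''). Writing $A$ for the event that the stable-world run encounters no flip, on $A$ the two runs coincide and in particular return the same decision $\widehat{\test}_{\eps,\delta}$. Taking expectation over the injection (with $\EE{\cdot}$ below denoting the stable world together with the injection) yields
\[
	\Pp{\alg'}{\widehat{\test}_{\eps,\delta} = 1} \ \geq \ \EE{\One{\widehat{\test}_{\eps,\delta} = 1}\cdot (1-q)^{N}} \ \geq \ (1-q)^{\kappa}\,\Pp{\alg}{\widehat{\test}_{\eps,\delta} = 1},
\]
where $q$ is the per-configuration injection probability, chosen so that $1-q = (1-\delta')/(1-\delta^*_\eps)$, and $N\le\kappa$ is the number of essentially-independent stable observations the test extracts, bounded deterministically by the data budget (so that $(1-q)^{N}\ge (1-q)^{\kappa}$). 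Rearranging and combining with validity gives $\Pp{\alg}{\widehat{\test}_{\eps,\delta}=1}\le (1-q)^{-\kappa}\,\alpha = \alpha\,((1-\delta^*_\eps)/(1-\delta'))^{\kappa}$, and then $\eta\downarrow 0$ together with the trivial bound $\le 1$ delivers~\eqref{eqn:power_optimal}.

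The crux --- and the step I expect to be genuinely hard --- is controlling the effective count $N$, i.e.\ proving that no matter how the test re-splits, bootstraps (Example~\ref{example:bootstrapped}), simulates (Example~\ref{example:simulated}), or adaptively targets its queries, it can harvest at most $\kappa$ essentially-independent stable leave-one-out observations from only $N_\ell$ labeled and $N_u$ unlabeled points. The obstacle is a real tension: to make $\alg'$ truly unstable the injected instability must be common under fresh draws from $\dgp$ (probability $\approx\delta$), yet to force $N\le\kappa$ the injection must be \emph{coherent} across the overlapping sub-configurations the test can manufacture by reusing its finite data, so that replaying or recombining the same points produces no new independent evidence. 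Making this precise amounts to an accounting in which each informative observation consumes $n$ labeled training points and one further (labeled or unlabeled) point, so that the number of such observations extractable from the budget is at most $\kappa=\min\{N_\ell/n,\,(N_\ell+N_u)/(n+1)\}$ --- the two competing terms reflecting the two costs, and the ratio form explaining why the exponent is the real number $\kappa$ rather than the Binomial test's $\lfloor\kappa\rfloor$. This budget bound, fed through the coupling above, is what simultaneously yields the constant $\kappa$ and certifies that the naive test of Section~\ref{sec:simple_test} is essentially optimal.
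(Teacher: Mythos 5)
There is a genuine gap, and it is exactly where you flag it: the claim that an adaptive test can extract at most $\kappa$ ``essentially-independent'' chances to see an injected flip is the entire content of the theorem, and your proposal leaves it unproved. Worse, the premise that forces you into this corner is mistaken. You argue the hard alternative must share the same data distribution $\dgp$, ``otherwise the test could separate the two worlds from the observed data alone.'' But a small mixture perturbation $\dgp' = (1-c)\cdot\dgp + c\cdot\dgp_1$ is \emph{not} separable from $\dgp$ with finite data: one can couple the two samples so that they coincide with probability at least $(1-c)^{N_\ell+N_u}$ (or $(1-c)^{N_\ell}$ when $\dgp_1$ and $\dgp$ share the same $X$-marginal), and this residual detectability is not an obstacle---it is precisely the source of the exponent $\kappa$. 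The paper's proof perturbs \emph{both} $\alg$ and $\dgp$: by uncountability of $\R$ (resp.\ $\R^d$) there is a response value $y_*$ (resp.\ feature $x_*$) that appears nowhere in the test's entire countable transcript with probability one; one sets $\dgp_1 = \dgp_X \times \delta_{y_*}$ and lets $\alg'$ agree with $\alg$ except that it outputs a corrupted model whenever $y_*$ occurs in a size-$n$ training set. Then $(\alg',\dgp',n)$ fails $(\eps,\delta)$-stability because under $\dgp'$ the trigger fires with probability $1-(1-c)^n$, giving instability $1-(1-\delta^*_\eps)(1-c)^n > \delta$ once $c > 1 - \rbr{(1-\delta)/(1-\delta^*_\eps)}^{1/n}$; meanwhile, on data from $\dgp$ the trigger \emph{never} fires (shown by induction over the stages of Definition~\ref{def:black_box_test}), so $\alg$ and $\alg'$ produce identical transcripts almost surely no matter how many adaptive, bootstrapped, or simulated queries the test makes. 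This is what replaces your unproven coherence lemma: the flip events are tied to the data-generating process, not to the queries, so the test gets $N_\ell$ (or $N_\ell+N_u$) chances total---one per data point---rather than one per query. Chaining validity on $(\alg',\dgp')$ with the data coupling gives power $\leq \alpha(1-c)^{-N_\ell}$, and letting $c$ tend to its threshold yields the exponent $N_\ell/n$; the $x_*$ construction gives $(N_\ell+N_u)/(n+1)$, and the minimum is $\kappa$.

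By contrast, with $\dgp$ held fixed your injected instability must occur with probability exceeding $\delta - \delta^*_\eps$ on configurations drawn from $\dgp$ itself---exactly the kind of configurations the test can assemble, perturb, and reassemble from its own sample. An adaptive test may issue countably many queries, and nothing in your argument caps the number of stochastically fresh chances it gets to hit an injected configuration; any such cap would have to come from engineering strong correlations among the injections across all overlapping datasets the test can manufacture, while keeping $\alg'$ a legitimate symmetric measurable algorithm with the right marginal instability. Your budget heuristic (``each informative observation consumes $n$ labeled points and one more point'') restates the desired conclusion but supplies no mechanism; note also that it cannot by itself explain why the two competing exponents are $N_\ell/n$ and $(N_\ell+N_u)/(n+1)$, which in the paper arise from the choice of perturbing the $Y$-marginal versus the $X$-marginal of $\dgp$. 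Your calibration $1-q = (1-\delta')/(1-\delta^*_\eps)$ and the final limiting step are consistent with the paper's bookkeeping, but the engine of the proof is missing and the route you committed to makes it substantially harder---perhaps impossible---to build.
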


We prove this theorem in Appendix~\ref{app:proof_thm:optimality}. The key idea is that for any triple $(\alg,\dgp,n)$ that is $(\eps,\delta)$-stable, we can perturb the algorithm $\alg$ and the data distribution $\dgp$ very slightly so that the perturbed triple $(\alg',\dgp',n)$ is no longer $(\eps,\delta)$-stable. However, with only $N_\ell$ labeled data points and $N_u$ unlabeled data points available, it is difficult to tell apart the outcomes of running the original algorithm $\alg$ on data from $\dgp$ from the outcomes of running the perturbed algorithm $\alg'$ on data from $\dgp'$. Since validity of the test ensures that $\tPP{\widehat{\test}_{\eps,\delta} = 1}\leq\alpha$ for the perturbed triple $(\alg',\dgp',n)$, it therefore follows that this probability cannot be much larger than $\alpha$ for the original triple $(\alg,\dgp,n)$.

\subsubsection{Optimality of the Binomial test}

To understand the implications of Theorem~\ref{thm:optimality}, let us compare the upper bound on power~\eqref{eqn:power_optimal}, which holds universally for any black-box test of stability with assumption-free validity, to the power \eqref{eqn:power_simple_test} achieved by our very simple Binomial test. In particular, in the case where $\kappa$ is an integer, we see that the Binomial test defined in Section~\ref{sec:simple_test} is in fact optimal over all black-box tests in the regime $\delta \leq 1-\alpha^{1/\kappa}$ or $\delta^*_\eps = 0$. That is to say, there is no possibility of power gain by running a more complicated procedure in this scenario.

For example, if $N_\ell = n$ (i.e., the number of available labeled data points matches the sample size at which we want to test stability) and $N_u\geq 1$, then $\kappa = 1$, and we can test the stability of $\alg$ with maximum power using only two calls to $\alg$ (one on the $n$ points to fit $\muh_n$ and the other on the first $n-1$ points to fit $\muh_{n-1}$) in the regime $\delta \leq 1-\alpha$. This is a surprising result---we would intuitively expect that additional calls to $\alg$, on more carefully curated datasets obtained via some more sophisticated resampling procedure, might yield useful information about stability beyond simply calling $\alg$ on the single available dataset. However, by comparing the power achieved in Theorem~\ref{thm:simple_test} and the universal upper bound shown in Theorem~\ref{thm:optimality}, we see that any additional calls to $\alg$ are not adding any information to the test.

\subsubsection{The parameter $\kappa$}

Is Theorem~\ref{thm:optimality} a pessimistic result? The answer depends on {\em how} one wishes to use a stability test, which in turn 
will affect the value of the parameter $\kappa$, which controls the upper bound on power. Broadly speaking, we can imagine two different types of goals.

First, we might simply be interested in characterizing the algorithm, by studying its behavior and its properties on different data distributions. In this setting, it is plausible that we might be interested in learning about the algorithm's stability at a sample size $n$ that is far smaller than the available dataset size $N_\ell+N_u$. Here, the power of a black-box test can be quite high as long as the sample size ratio $\kappa$ is large.

Alternatively, it may be the case that we are not interested in learning about the algorithm's stability for its own sake, but rather that we need to verify that stability holds in order to check assumptions for other procedures. For example, for the task of predicting plausible values of $Y$ given a new value of $X$, methods based on cross-validation can be used to construct predictive intervals around $\muh_n(X)$, but their coverage guarantees typically require algorithmic stability \citep{Steinberger2016LeaveOneOut, Steinberger2020Conditional, Barber2021Predictive}. In this case, we would probably want to test the algorithm's stability at the same sample size as the number of available labeled data points (i.e., $n = N_\ell$), since we would expect that using the largest possible sample size for training would yield the most accurate fitted model. This is the scenario for which the implications of Theorem~\ref{thm:optimality} are quite discouraging. When $\kappa = 1$, the maximum possible power is upper-bounded by $\alpha / (1-\delta)$, which, if $\delta$ is small, is not much higher than the error level $\alpha$ allowed by the validity condition~\eqref{eqn:valid_hat_T}. In other words, it is impossible for our test to be substantially better than random if we need to check the stability assumption at a sample size $n \approx N_\ell$.

\section{Extensions}

In this section, we consider several extensions and alternative frameworks for testing algorithmic stability.

\subsection{Alternative targets for inference}

So far, we have focused on finding an answer to the binary question ``Does $(\alg,\dgp,n)$ satisfy $(\eps,\delta)$-stability?" with a fixed $\eps$ and $\delta$. However, this is only one of many possible ways to formulate this inference problem, and here we list two additional questions that are also of interest.

\begin{quote}
Given a fixed $\eps \geq 0$, is it possible to compute a data-dependent estimate $\widehat{\delta}_\eps$ such that $(\eps,\widehat{\delta}_\eps)$-stability holds for $(\alg,\dgp,n)$ with some desired confidence level $1-\alpha$?
\end{quote}

\begin{quote}
Given a fixed $\delta \in [0,1)$, is it possible to compute a data-dependent estimate $\widehat{\eps}_\delta$ such that $(\widehat{\eps}_\delta,\delta)$-stability holds for $(\alg,\dgp,n)$ with some desired confidence level $1-\alpha$?
\end{quote}

\noindent Similarly to how we defined the ``ground truth" $\test^*_{\eps,\delta}$ for the data-dependent outcome $\widehat{\test}_{\eps,\delta}$, we define the associated targets as
\[
	\delta^*_\eps = \inf\cbr{\delta \geq 0: \test^*_{\eps,\delta} = 1}, \quad \eps^*_\delta = \inf\cbr{\eps \geq 0: \test^*_{\eps,\delta} = 1}.
\]
Examining the definition of $\test^*_{\eps,\delta}$, we can observe that $\delta^*_\eps = \tPP{|\muh_n(X_{n+1}) - \muh_{n-1}(X_{n+1})| > \eps}$ (as considered earlier in~\eqref{eqn:define_delta_star}), while $\eps^*_\delta$ is the $(1-\delta)$-quantile of $|\muh_n(X_{n+1}) - \muh_{n-1}(X_{n+1})|$. By definition of $(\eps,\delta)$-stability, we see that for any $(\eps,\delta)$, it holds that
\begin{equation} \label{eqn:ground_truth_equiv}
	\test^*_{\eps,\delta} = 1 \quad \iff \quad \delta \geq \delta^*_\eps \quad \iff \quad \eps \geq \eps^*_\delta.
\end{equation}

Next, we define notions of assumption-free validity for inference on each of the new targets, which are the analogues of the validity condition~\eqref{eqn:valid_hat_T} for the test $\widehat{\test}_{\eps,\delta}$ studied earlier.
For the problem of estimating $\delta^*_\eps$ given a fixed $\eps \geq 0$, we seek an estimator $\widehat{\delta}_\eps$ such that
\begin{equation} \label{eqn:valid_hat_delta}
	\textnormal{$\PP{\widehat{\delta}_\eps \geq \delta^*_\eps} \geq 1-\alpha$ for all $(\alg,\dgp,n)$.}
\end{equation}
(Equivalently, $\tPP{\test_{\eps, \widehat{\delta}_\eps}^* = 1} \geq 1-\alpha$.) In other words, given an algorithm $\alg$ and data drawn from $\dgp$, we conclude with $1-\alpha$ level confidence that $(\alg,\dgp,n)$ is $(\eps, \widehat{\delta}_\eps)$-stable.
Similarly, for the problem of estimating $\eps^*_\delta$ given a fixed $\delta \in [0,1)$, we seek an estimator $\widehat{\eps}_\delta$ such that
\begin{equation} \label{eqn:valid_hat_eps}
	\textnormal{$\PP{\widehat{\eps}_\delta \geq \eps^*_\delta} \geq 1-\alpha$ for all $(\alg,\dgp,n)$.}
\end{equation}
(Equivalently, $\tPP{\test_{\widehat{\eps}_\delta, \delta}^* = 1} \geq 1-\alpha$.) In other words, given an algorithm $\alg$ and data drawn from $\dgp$, we conclude with $1-\alpha$ level confidence that $(\alg,\dgp,n)$ is $(\widehat\eps_\delta,\delta)$-stable.

It turns out that these three inference targets $\test^*_{\eps,\delta}$, $\delta^*_\eps$, and $\eps^*_\delta$ are highly interconnected---any procedure that is able to achieve one can be adapted to achieve the other two. The following proposition says that any assumption-free estimation procedure (for either parameter $\delta^*_\eps$ or $\eps^*_\delta$) can be turned into an assumption-free test $\widehat{\test}_{\eps,\delta}$, and vice versa.

\begin{proposition} \label{prop:T_delta_eps}
The following equivalence holds for inference procedures $\widehat{\test}_{\eps,\delta}$, $\widehat{\delta}_\eps$, and $\widehat{\eps}_\delta$:

\begin{enumerate}[label=(\alph*)]
\item For any fixed $\eps \geq 0$, suppose there is an estimator $\widehat{\delta}_\eps$ satisfying~\eqref{eqn:valid_hat_delta}. Then, for any $\delta \in [0,1)$, the test $\widehat{\test}_{\eps,\delta} = \tOne{\widehat{\delta}_\eps \leq \delta}$ satisfies~\eqref{eqn:valid_hat_T}.

\item For any fixed $\delta \in [0,1)$, suppose there is an estimator $\widehat{\eps}_\delta$ satisfying~\eqref{eqn:valid_hat_eps}. Then, for any $\eps \geq 0$, the test $\widehat{\test}_{\eps,\delta} = \tOne{\widehat{\eps}_\delta \leq \eps}$ satisfies~\eqref{eqn:valid_hat_T}.

\item Suppose $\{\widehat{\test}_{\eps,\delta}: \eps \geq 0,\ \delta \in [0,1)\}$ is a family of tests such that each $\widehat{\test}_{\eps,\delta}$ satisfies~\eqref{eqn:valid_hat_T}. Then, for any fixed $\eps \geq 0$, the estimator $\widehat{\delta}_\eps = \inf\{\delta: \widehat{\test}_{\eps,\delta'} = 1\ \forall \ \delta'\geq \delta\}$ (or $\widehat{\delta}_\eps = 1$, if this set is empty) satisfies~\eqref{eqn:valid_hat_delta}. 

\item Under the same assumptions as in (c), for any fixed $\delta \in [0,1)$, the estimator $\widehat{\eps}_\delta = \inf\{\eps : \widehat{\test}_{\eps',\delta} = 1\ \forall \ \eps'\geq \eps\}$ (or $\widehat{\eps}_\delta = \infty$, if this set is empty) satisfies~\eqref{eqn:valid_hat_eps}.

\end{enumerate}
\end{proposition}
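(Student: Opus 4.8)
The plan is to route everything through the chain of equivalences in~\eqref{eqn:ground_truth_equiv}, $\test^*_{\eps,\delta} = 1 \iff \delta \geq \delta^*_\eps \iff \eps \geq \eps^*_\delta$; once this is in hand, each of the four claims reduces to an elementary set-inclusion argument together with the monotone continuity of probability. The parts split naturally into the easy direction (a)--(b), which converts an estimator into a test, and the harder direction (c)--(d), which converts a family of tests into an estimator.

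For part (a), I would note that $\test^*_{\eps,\delta} = 0$ is equivalent to $\delta < \delta^*_\eps$ by~\eqref{eqn:ground_truth_equiv}. Since $\widehat{\test}_{\eps,\delta} = \tOne{\widehat{\delta}_\eps \leq \delta}$, the event $\{\widehat{\test}_{\eps,\delta} = 1\}$ is exactly $\{\widehat{\delta}_\eps \leq \delta\}$, which is contained in $\{\widehat{\delta}_\eps < \delta^*_\eps\}$ because $\delta < \delta^*_\eps$. The assumed validity~\eqref{eqn:valid_hat_delta} then gives $\PP{\widehat{\delta}_\eps < \delta^*_\eps} \leq \alpha$, which is precisely~\eqref{eqn:valid_hat_T}. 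Part (b) is the mirror image: $\test^*_{\eps,\delta} = 0$ is equivalent to $\eps < \eps^*_\delta$, the event $\{\widehat{\test}_{\eps,\delta} = 1\} = \{\widehat{\eps}_\delta \leq \eps\}$ sits inside $\{\widehat{\eps}_\delta < \eps^*_\delta\}$, and~\eqref{eqn:valid_hat_eps} closes the bound.

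The substance is in part (c), with (d) identical up to swapping the roles of $\eps$ and $\delta$. Here the target~\eqref{eqn:valid_hat_delta} is equivalent to $\PP{\widehat{\delta}_\eps < \delta^*_\eps} \leq \alpha$, and I would establish this through the pointwise inclusion that for every fixed $\delta'' < \delta^*_\eps$, $\{\widehat{\delta}_\eps < \delta''\} \subseteq \{\widehat{\test}_{\eps,\delta''} = 1\}$. To see the inclusion, observe that $\widehat{\delta}_\eps < \delta''$ forces the defining set $\cbr{\delta : \widehat{\test}_{\eps,\delta'} = 1\ \forall\, \delta' \geq \delta}$ to contain some $\gamma < \delta''$, and then by construction $\widehat{\test}_{\eps,\delta'} = 1$ for all $\delta' \geq \gamma$, in particular at $\delta' = \delta''$. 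Because $\delta'' < \delta^*_\eps$ means $\test^*_{\eps,\delta''} = 0$ by~\eqref{eqn:ground_truth_equiv}, the validity~\eqref{eqn:valid_hat_T} of the fixed test $\widehat{\test}_{\eps,\delta''}$ yields $\PP{\widehat{\delta}_\eps < \delta''} \leq \PP{\widehat{\test}_{\eps,\delta''} = 1} \leq \alpha$. Finally I would choose a sequence $\delta_k \uparrow \delta^*_\eps$ with $\delta_k < \delta^*_\eps$, write $\{\widehat{\delta}_\eps < \delta^*_\eps\} = \bigcup_k \{\widehat{\delta}_\eps < \delta_k\}$, and pass to the limit by continuity from below to conclude $\PP{\widehat{\delta}_\eps < \delta^*_\eps} \leq \alpha$.

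The main obstacle is not any single inequality but getting the monotonization right: the family $\{\widehat{\test}_{\eps,\delta}\}$ need not be monotone in $\delta$, which is exactly why the definitions of $\widehat{\delta}_\eps$ and $\widehat{\eps}_\delta$ quantify over all $\delta' \geq \delta$ (resp.\ $\eps' \geq \eps$), and the inclusion above is where that quantifier earns its keep. I would also take care of the boundary conventions---the empty-set cases giving $\widehat{\delta}_\eps = 1$ and $\widehat{\eps}_\delta = \infty$ make the relevant events vacuous (e.g.\ $\{\widehat{\delta}_\eps < \delta_k\}$ cannot occur when the set is empty and $\delta_k < 1$), and the degenerate cases $\delta^*_\eps = 0$ or $\eps^*_\delta = 0$ make $\PP{\widehat{\delta}_\eps < \delta^*_\eps}$ or $\PP{\widehat{\eps}_\delta < \eps^*_\delta}$ equal to zero outright. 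The only genuinely technical point worth a remark is measurability of $\widehat{\delta}_\eps$ and $\widehat{\eps}_\delta$ as infima over an uncountable family, but the countable reduction via the sequence $\delta_k$ (resp.\ $\eps_k$) already confines every probability computation to a countable collection of events.
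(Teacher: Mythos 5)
Your proposal is correct and follows essentially the same route as the paper's proof: parts (a) and (b) via the event inclusion $\{\widehat{\delta}_\eps \leq \delta\} \subseteq \{\widehat{\delta}_\eps < \delta^*_\eps\}$ (and its mirror image) combined with the assumed validity of the estimator, and parts (c) and (d) via the pointwise implication $\widehat{\delta}_\eps < \delta'' \Rightarrow \widehat{\test}_{\eps,\delta''} = 1$ together with continuity from below (which the paper phrases equivalently as $\PP{\widehat{\delta}_\eps < \delta^*_\eps} = \sup_{\delta < \delta^*_\eps} \PP{\widehat{\delta}_\eps < \delta}$). Your extra care with the empty-set and degenerate boundary cases is a fine addition but does not change the argument.
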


\noindent The proof is in Appendix~\ref{app:proof_prop:T_delta_eps}.

Proposition~\ref{prop:T_delta_eps} implies that the problems of assumption-free inference for any of these three targets are essentially equivalent. Therefore, although we have exclusively focused on the question of testing stability at a fixed $(\eps,\delta)$ in the earlier sections, our conclusions apply to the other two inference problems as well to bound the power of any black-box procedure. Specifically, assuming that $\widehat{\delta}_{\eps}$ (respectively, $\widehat{\eps}_{\delta}$) is a black-box procedure in the same sense as Definition~\ref{def:black_box_test}, combining Theorem~\ref{thm:optimality} with part (a) (respectively, part (b)) of Proposition~\ref{prop:T_delta_eps} will yield an upper bound on the probability of the event $\widehat{\delta}_{\eps} \leq \delta$ (respectively, $\widehat{\eps}_{\delta} \leq \eps$), thus establishing bounds on the power
of any such procedure. Conversely, combining part (c) (respectively, part (d)) of Proposition~\ref{prop:T_delta_eps} with the Binomial test $\widehat{\test}_{\eps,\delta}$ constructed in Section~\ref{sec:simple_test}, offers a simple construction for an estimator $\widehat{\delta}_\eps$ (respectively, $\widehat{\eps}_\delta$) satisfying the validity condition~\eqref{eqn:valid_hat_delta} (respectively,~\eqref{eqn:valid_hat_eps}).

\subsection{The role of uncountability} \label{sec:discussion_uncountable}

Our optimality result (Theorem~\ref{thm:optimality}) is stated in the setting of data $(X,Y)\in\R^d\times\R$.
While it is not necessary for the feature and response to lie in $\R^d$ and in $\R$ specificially, 
our proof does rely on the {\em uncountability} of these spaces. Specifically, 
the proof of the  theorem relies on finding a point $x_*$ in the feature space $\R^d$, and/or a point $y_*$ in the response space $\R$, which has zero probability of being observed at any point throughout the course of the black-box testing procedure (i.e., at any stage of Definition~\ref{def:black_box_test}). We then define the perturbed distribution $P'$ to place a small probability at $x_*$ and/or $y_*$, and define the perturbed algorithm $\alg'$ to output a corrupted model if the input data includes $x_*$ and/or $y_*$.
In particular, establishing the existence of such a point relies on the uncountability of the feature space $\R^d$ (for finding $x_*$) or of the response space $\R$ (for finding $y_*$). 

Of course, in practice, many applied problems have either the features or the response (or both) lying in a countable space.
 If the support of $X$ or $Y$ is extremely large (e.g., a real-valued quantity rounded to floating point precision), then we would
expect the same type of bound on power to still hold (see Section~\ref{sec:discussion_uncountable_part2} for further discussion). 
In other settings, however,
we might have a discrete random variable with a small support---for instance, an image classification
task with a real-valued image $X \in [0,1]^d$ as the feature and a binary label $Y \in \{0,1\}$ as the response. 
In this setting, the proof technique of finding $x_*$ or $y_*$ can no longer be carried out.

However, we can split into cases to obtain partial results. Specifically, suppose we observe data points $(X,Y) \in \X \times \Y$ for some feature space $\X \subseteq \R^d$ and some response space $\Y \subseteq \R$. The black-box test $\widehat{\test}_{\eps,\delta} $ is then only required to satisfy validity~\eqref{eqn:valid_hat_T} with respect to algorithms that train on data lying in $\X \times \Y$. If $\Y$ is assumed to be uncountable, then we can show (via the relevant part of the proof of Theorem~\ref{thm:optimality}) that power is bounded as
\begin{equation} \label{eqn:power_optimal_Ycal}
	\PP{\widehat{\test}_{\eps,\delta} = 1} \leq \cbr{\alpha \cdot \rbr{\frac{1-\delta^*_\eps}{1-\delta}}^{N_\ell/n}} \wedge 1.
\end{equation}
If instead $\X$ is assumed to be uncountable, then we can show (via the relevant part of the proof of Theorem~\ref{thm:optimality}) that power is bounded as
\begin{equation} \label{eqn:power_optimal_Xcal}
	\PP{\widehat{\test}_{\eps,\delta} = 1} \leq \cbr{\alpha \cdot \rbr{\frac{1-\delta^*_\eps}{1-\delta}}^{(N_\ell+N_u)/(n+1)}} \wedge 1.
\end{equation}
Of course, if $\X$ and $\Y$ are both uncountable, then combining these two upper bounds yields the previous result~\eqref{eqn:power_optimal}.

\subsection{Tests for restricted model classes}

Our optimality result (Theorem~\ref{thm:optimality}) also exploits our agnosticism about $\alg$, such as our unwillingness to make assumptions about the model class $\modelclass \subseteq \modelspace$ that $\alg$ maps onto. However, it is not uncommon to have additional
 knowledge about the kind of models $\alg$ produces, and in some settings
 these models might even be known to lie in a simple class (e.g., linear functions). This opens up the possibility of constructing
a test of stability that makes use of this extra information: can we gain power, and avoid the upper bound of Theorem~\ref{thm:optimality},
by leveraging information about the types of functions $\muh$ that can be returned by $\alg$?
Since model class complexity has long been central to our understanding of the difficulty of various
statistical learning problems, and we may expect it to play some role here as well.

Surprisingly, we find that restricting our attention to only those algorithms producing models in some specific class does little to improve our power, even for extremely simple classes such as the class of all constant models or of all linear models. This is because the difficulty of our black-box testing framework stems mostly from the
fact that {\em the algorithm $\alg$ itself} is a black box that may be arbitrarily complex,
while the complexity of the fitted function $\muh$ plays only a minor role in the hardness of the testing problem.

\subsubsection{Black-box tests for transparent models}

Before we can re-analyze the power, we introduce a more potent variant of the black-box testing strategy in which the final decision $\widehat{\test} \in \{0, 1\}$ is computed as a {\em direct} function of observed fitted models $\muh^{(r)}$'s (although these models themselves are still obtained by  evaluating $\alg$ on different labeled datasets $\dataset_\ell^{(r)}$'s). We can think of such a test as having the ability to ``see through" any $\muh$ without being able to do the same for $\alg$, and thus we refer to this as the ``transparent model'' setting.

\begin{definition}[Black-box tests for transparent models] \label{def:black_box_test_transparent}
We say that $\widehat{\test}$ is a {\em black-box test for transparent models} if for some functions $f^{(1)}, f^{(2)}, \dots$ and $g$, it can be defined in the following way:
\begin{enumerate}
\item At the initial stage $r = 1$,

\begin{enumerate}
\item Generate a labeled dataset $\dataset_\ell^{(1)} \in \datasetspace_\ell$ and a randomization term $\xi^{(1)}$ as a (possibly randomized) function of the input datasets:
\[
	\rbr{\dataset_\ell^{(1)}, \xi^{(1)}} = f^{(1)}\sbr{\dataset_\ell, \dataset_u, \zeta^{(1)}},
\]
where $\zeta^{(1)} \sim \Uniform [0,1]$.

\item Fit a model using the generated labeled data $\dataset_\ell^{(1)}$: $\muh^{(1)} = \algof{\dataset_\ell^{(1)}; \xi^{(1)}}$.
\end{enumerate}

\item At each stage $r = 2, 3, \dots$,

\begin{enumerate}
\item Generate a labeled dataset $\dataset_\ell^{(r)} \in \datasetspace_\ell$ and a randomization term $\xi^{(r)}$ as a (possibly randomized) function of all the datasets and fitted models observed so far:
\begin{multline*}
	\rbr{\dataset_\ell^{(r)}, \xi^{(r)}}
	= f^{(r)}\sbr{\dataset_\ell, \dataset_u, \Big(\dataset_\ell^{(s)}\Big)_{s = 1}^{r-1}, \rbr{\muh^{(s)}}_{s = 1}^{r-1}, \rbr{\zeta^{(s)}}_{s = 1}^{r- 1},\rbr{\xi^{(s)}}_{s = 1}^{r-1}, \zeta^{(r)}},
\end{multline*}
where $\zeta^{(r)} \sim \Uniform [0,1]$.

\item Fit a model using the generated labeled data $\dataset_\ell^{(r)}$:
$\muh^{(r)} = \algof{\dataset_\ell^{(r)}; \xi^{(r)}}$.

\end{enumerate}

\item Compute the output $\widehat{\test} \in \{0,1\}$ as a (possibly randomized) function of the sequence of generated datasets and fitted models:
\[
	\widehat{\test} = g\sbr{\dataset_\ell, \dataset_u, \Big(\dataset_\ell^{(r)}\Big)_{r \geq 1}, \rbr{\muh^{(r)}}_{r \geq 1}, \rbr{\zeta^{(r)}}_{r \geq 1}, \rbr{\xi^{(r)}}_{r \geq 1}, \zeta},
\]
where $\zeta \sim \Uniform [0,1]$.\footnote{We cannot assume that $f^{(1)}, f^{(2)}, \dots, g$ are measurable because we are not treating $\modelspace$ as a measure space. However, we instead assume that the functions $f^{(1)}, f^{(2)}, \dots, g$ must be chosen such that the resulting map $(\dataset_\ell, \dataset_u, \zeta_1, \zeta_2, \dots, \zeta) \mapsto \widehat{\test} \in \{0, 1\}$ is measurable.} 

\end{enumerate}
\end{definition}

\noindent Compare this to Definition~\ref{def:black_box_test}: there, the fitted models $\muh^{(r)}$
could only be observed indirectly, through {\em evaluations} of each $\muh^{(r)}$ on different generated unlabeled data $\dataset_u^{(r)}$,
while in this new setting, $\muh^{(r)}$ can be observed directly.
On the other hand, both here and in Definition~\ref{def:black_box_test},
the algorithm $\alg$ can only be studied through evaluations on different generated labeled datasets $\dataset_\ell^{(r)}$ to obtain models $\muh^{(r)}$.

Even with this additional capability to examine fitted models $\muh$ analytically,
our next result shows that tests required to satisfy the assumption-free validity condition \eqref{eqn:valid_hat_T} for {\em all} algorithms
show only a miniscule improvement in achievable power.

\begin{theorem} \label{thm:optimality_transparent}
Fix any parameters $\eps \geq 0$ and $\delta \in [0,1)$, any desired error level $\alpha \in (0,1)$, and any integers $n \geq 2$ and $N_\ell, N_u \geq 0$. Let $\widehat{\test}_{\eps,\delta}$ be any  any black-box test for transparent models as in Definition~\ref{def:black_box_test_transparent}, satisfying assumption-free validity~\eqref{eqn:valid_hat_T} at level $\alpha$---that is, $\widehat{\test}_{\eps,\delta}$ is valid with respect to {\em all} algorithms.

Then, for any $(\alg,\dgp,n)$ that is $(\eps,\delta)$-stable (i.e., $\test^*_{\eps,\delta} = 1$), the power of $\widehat{\test}_{\eps,\delta}$ is bounded as
\begin{equation} \label{eqn:power_optimal_transparent}
	\PP{\widehat{\test}_{\eps,\delta} = 1} \leq \cbr{\alpha \cdot \rbr{\frac{1-\delta^*_\eps}{1-\delta}}^{\kappa_\ell}} \wedge 1,
\end{equation}
where $\kappa_\ell = \kappa_\ell (n,N_\ell) = {N_\ell} / {n}$, and $\delta^*_\eps$ is defined as in~\eqref{eqn:define_delta_star}.
\end{theorem}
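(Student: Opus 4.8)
The plan is to reuse the perturbation strategy behind Theorem~\ref{thm:optimality}, but to keep only its response-space half (the one exploiting uncountability of $\R$ as the \emph{response} space). The reason is precisely the new capability granted by Definition~\ref{def:black_box_test_transparent}: since the test now observes each fitted $\muh^{(r)}$ as a full function rather than through evaluations on generated unlabeled points, any corruption that is localized in the feature space---the ingredient that produced the $(N_\ell+N_u)/(n+1)$ exponent for standard black-box tests---would be directly visible to the test and hence useless to the adversary. Only a corruption \emph{triggered by response labels} remains hidden, and this is exactly why the achievable exponent collapses to $\kappa_\ell=N_\ell/n$ and ceases to depend on $N_u$.

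Fix a stable triple $(\alg,\dgp,n)$, so $\delta^*_\eps\le\delta$. First I would select a response value $y_*$ that is never observed under $(\alg,\dgp)$: the labeled input responses together with every label value the fixed generators $f^{(1)},f^{(2)},\dots$ could place into a synthesized dataset $\dataset_\ell^{(r)}$ form a countable collection of random variables, their atoms form a countable set, and since $\R$ is uncountable some $y_*$ avoids all of them, giving $\PP{y_*\text{ ever appears}}=0$. Define $\dgp'$ by replacing the response with $y_*$ independently with probability $\rho$ (leaving the feature marginal unchanged), and define $\alg'$ to equal $\alg$ on any $y_*$-free training set while, on a set $S$ containing at least one $y_*$-labeled point, outputting $\alg(S_0;\xi)+M\cdot|S|$, where $S_0$ is the $y_*$-free subset and $M$ is a large constant (this $\alg'$ is symmetric and measurable). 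I would then check that $(\alg',\dgp',n)$ is not $(\eps,\delta)$-stable: whenever some point among the $n$ training points is labeled $y_*$, the models $\muh_n$ and $\muh_{n-1}$ differ, either by the size-dependent shift $M$ (when their $y_*$-free subsets coincide) or by $\alg(S_0\cup\{\cdot\};\xi)-\alg(S_0;\xi)+M$ (when they differ by one normal point), the latter exceeding $\eps$ once $M$ overwhelms the almost surely finite base difference. Letting $M\to\infty$ drives the instability probability on the event ``some $y_*$ among the $n$ points'' to one, so the overall instability probability tends to $\rbr{1-\rbr{1-\rho}^n}+\rbr{1-\rho}^n\delta^*_\eps=1-\rbr{1-\rho}^n\rbr{1-\delta^*_\eps}$, which exceeds $\delta$ once $\rho>1-\rbr{\frac{1-\delta}{1-\delta^*_\eps}}^{1/n}$.

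The conclusion would then follow from a coupling-plus-validity step. I would couple the two worlds through common features and common $\rho$-coins, so that on the event that no input label is flipped to $y_*$---probability $\rbr{1-\rho}^{N_\ell}$ under $\dgp'$, probability one under $\dgp$---every training set the test builds is $y_*$-free, whence $\alg'$ agrees with $\alg$ throughout and the entire transcript, including the fully observed models $\muh^{(r)}$, is identical in the two worlds. Since $\widehat{\test}_{\eps,\delta}$ is a function of this transcript, assumption-free validity applied to the unstable triple gives
\[
\alpha \ \geq\ \Pp{\alg',\dgp'}{\widehat{\test}_{\eps,\delta}=1}\ \geq\ \rbr{1-\rho}^{N_\ell}\cdot\Pp{\alg,\dgp}{\widehat{\test}_{\eps,\delta}=1},
\]
so the power is at most $\alpha\rbr{1-\rho}^{-N_\ell}$; sending $M\to\infty$ and $\rho\downarrow 1-\rbr{\frac{1-\delta}{1-\delta^*_\eps}}^{1/n}$ produces the bound $\alpha\rbr{\frac{1-\delta^*_\eps}{1-\delta}}^{N_\ell/n}\wedge 1$.

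I expect the main obstacle to be the instability construction, and specifically the single configuration in which a \emph{normal} training point is the one removed while a $y_*$-labeled point remains in the retained set: there the gap between $\muh_n$ and $\muh_{n-1}$ is dictated by the base algorithm's own leave-one-out difference, which cannot be controlled directly and must instead be dominated by the large shift $M$ through a limiting argument. This configuration is exactly what supplies the factor $1/n$, sharpening the naive exponent $N_\ell$---which one gets by triggering instability only when the removed point is itself $y_*$---down to the claimed $N_\ell/n$. A secondary delicate point is the measure-theoretic choice of $y_*$: because a transparent test may synthesize arbitrary labeled datasets, $y_*$ must avoid not only the atoms of $\dgp$ but every label value the generators $f^{(r)}$ could emit with positive probability under $(\alg,\dgp)$, which is what makes the coupling on the no-$y_*$ event exact.
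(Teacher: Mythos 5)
Your proposal is correct and takes essentially the same approach as the paper: the paper likewise proves this bound by keeping only the response-space half of the Theorem~\ref{thm:optimality} perturbation argument (choosing a never-observed label $y_*$, mixing $\dgp_X\times\delta_{y_*}$ into $\dgp$, corrupting $\alg$ only when a training label equals $y_*$, and combining the transcript-coupling step with validity applied to the unstable perturbed triple), exactly because a label-triggered corruption is the only one invisible to a test that observes fitted models directly. The only substantive difference is the corrupted output: you use $\alg[S_0;\xi]+M\cdot|S|$ and must send $M\to\infty$ to dominate the base algorithm's leave-one-out difference in the awkward configuration where a normal point is removed while a $y_*$-labeled point remains, whereas the paper (which proves Theorem~\ref{thm:optimality_constrained} and specializes $\modelclass=\modelspace$) outputs $1+\eps+\max_{\sigma}\alg[(\dataset_\sigma)_{-m};\xi](x)$, the maximum over all leave-one-out refits, which exceeds the uncorrupted $\muh_{n-1}$ by more than $\eps$ deterministically and thus needs no limiting argument.
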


\noindent  This result is proved in Appendix~\ref{app:proofs_transparent}. Note that this upper bound is identical to the bound \eqref{eqn:power_optimal_Ycal} obtained in the setting where the feature vector $X$ lies in a countable space (but $Y$ is still assumed to lie in an uncountable space). Indeed, the connection is straightforward: if $X$ lies in a countable space, then according to our definition of a black-box test (Definition~\ref{def:black_box_test}, which allows countably infinitely many evaluations of each $\muh$), it is in principle possible to evaluate $\muh(x)$ for every possible feature vector $x$---and so we are essentially in the transparent-model setting whenever $X$ lies in a countable space.

In practice, labeled data tend to be more scarce than unlabeled data, so we would expect ${N_\ell} / {n} < \trbr{N_\ell+N_u} / \trbr{n+1}$. If this is the case, then $\kappa_\ell = \kappa$, and there is no benefit from using a black-box test for transparent models in place of a complete black-box test. Even in the rare case that $\trbr{N_\ell+N_u} / \trbr{n+1}$ is the smaller ratio, because $\kappa_\ell \leq \kappa \cdot \trbr{n+1} / n$, the improved upper bound \eqref{eqn:power_optimal_transparent} is only incrementally higher than the previous upper bound \eqref{eqn:power_optimal}.

\subsubsection{Black-box tests for constrained models}

Theorem~\ref{thm:optimality_transparent} says that even when a black-box test has the capability to ``see through" any fitted model, this capability translates to only a mild improvement in terms of power, if the test is required to be valid for {\em all} algorithms $\alg$ (as well as for all probability distributions $\dgp$ and sample sizes $n$).

However, in settings where $\alg$ is known to always return fitted models $\muh$ from a particular class,
it does not seem necessary to require our test to be valid over {\em all} algorithms $\alg$. Instead,
we will now ask about the fundamental limits of black-box testing when we restrict to only those algorithms that produce models in a particular class. For example, if 
it is known that our black-box algorithm $\alg$ always returns linear models, then we can use a test that is guaranteed to be valid for all algorithms $\alg$
returning linear models. This is arguably a more sensible goal, and because we are requiring less of our test, it seems reasonable to hope for an improvement in power.

Given a model class $\modelclass \subseteq \modelspace$, let $\algspace_{\modelclass}$ be the set of algorithms that map into $\modelclass$, i.e., if $\alg \in \algspace_{\modelclass}$, then $\muh = \alg[\dataset; \xi] \in \modelclass$ for any labeled dataset $\dataset$ and any randomization term $\xi$. Then, we say that a test $\widehat{\test}_{\eps,\delta}$ is valid at level $\alpha$ {\em restricted to the model class $\modelclass$} if
\begin{equation} \label{eqn:valid_hat_T_modelclass}
	\textnormal{For all $(\alg,\dgp,n)$ with $\alg \in \algspace_{\modelclass}$, if $\test^*_{\eps,\delta}(\alg,\dgp,n) = 0$ then $\PP{\widehat{\test}_{\eps,\delta} = 1} \leq \alpha$}.
\end{equation}

To prove a result analogous to Theorems~\ref{thm:optimality} and \ref{thm:optimality_transparent}, we need one condition on the model class $\modelclass$:
\begin{multline} \label{eqn:modelclass_assume}
\textnormal{For any $m \geq 1$ and $t, L > 0$, there is a symmetric function $M: (\modelclass)^m \mapsto \modelclass$}\\
\textnormal{such that, for all $\mu_1, \dots, \mu_m \in \modelclass$, the function $\mu = M(\mu_1, \dots, \mu_m)$}\\
\textnormal{satisfies $\mu(x) > t + \max_{i = 1, \dots, m}\{\mu_i(x)\}$ for all $\|x\|_2\leq L$.\footnote{As a regularity condition, we also assume that the map $((\dataset_1,\xi_1), \dots, (\dataset_m,\xi_m), x) \mapsto M\trbr{\talgof{\dataset_1; \xi_1}, \dots, \talgof{\dataset_m; \xi_m}}(x)$ is a measurable function for any algorithm $\alg$ satisfying the measurability conditions of Section~\ref{sec:background}.}}
\end{multline}
This is a mild condition. In particular, it is satisfied by any continuous class that includes all constant functions because we can take $M(\mu_1, \dots, \mu_m)$ to be the constant function $\mu(x) = 1 + t + \max_{i = 1, \dots, m} \{\sup_{x': \|x'\|_2 \leq L} \mu_i(x')\}$. For example, the class of all linear functions satisfies \eqref{eqn:modelclass_assume}. 

\begin{theorem} \label{thm:optimality_constrained}
Fix any parameters $\eps \geq 0$ and $\delta \in [0,1)$, any desired error level $\alpha \in (0,1)$, and any integers $n \geq 2$ and $N_\ell, N_u \geq 0$. Let $\widehat{\test}_{\eps,\delta}$ be any  black-box test for transparent models as in Definition~\ref{def:black_box_test_transparent},
satisfying assumption-free validity~\eqref{eqn:valid_hat_T_modelclass} at level $\alpha$
with respect to a constrained model class $\modelclass$ satisfying \eqref{eqn:modelclass_assume}.

Then, for any $(\alg,\dgp,n)$ with $\alg \in \algspace_{\modelclass}$ that is $(\eps,\delta)$-stable (i.e., $\test^*_{\eps,\delta} = 1$), the power of $\widehat{\test}_{\eps,\delta}$ is bounded as
\begin{equation} \label{eqn:power_optimal_transparent}
	\PP{\widehat{\test}_{\eps,\delta} = 1} \leq \cbr{\alpha \cdot \rbr{\frac{1-\delta^*_\eps}{1-\delta}}^{\kappa_\ell}} \wedge 1,
\end{equation}
where $\kappa_\ell = \kappa_\ell (n,N_\ell) = {N_\ell} / {n}$, and $\delta^*_\eps$ is defined as in~\eqref{eqn:define_delta_star}.
\end{theorem}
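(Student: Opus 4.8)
The plan is to reproduce the change-of-measure argument that underlies Theorem~\ref{thm:optimality_transparent}, which already delivers the exponent $\kappa_\ell = N_\ell/n$ through a perturbation of the \emph{response}, while taking care that the perturbed algorithm we build remains in $\algspace_{\modelclass}$ so that the restricted validity hypothesis~\eqref{eqn:valid_hat_T_modelclass} can be applied. Fix an $(\eps,\delta)$-stable triple $(\alg,\dgp,n)$ with $\alg\in\algspace_{\modelclass}$, and let $E=\{\widehat{\test}_{\eps,\delta}=1\}$ be the event whose probability we must bound. First I would isolate a ``phantom'' response value $y_*\in\R$: when the (transparent) test is run under $(\alg,\dgp)$ the data almost surely contains no $y_*$ for all but countably many choices of $y_*$, and across its countably many stages the test generates only countably many response values, each law having at most countably many atoms; by uncountability of $\R$ there is a $y_*$ that, with probability one, is neither present in the input data nor ever generated by the test when it is fed data free of $y_*$. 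Define $\dgp'$ by drawing $(X,Y)\sim\dgp$ and then, independently with probability $p$, overwriting $Y$ by $y_*$; since $\dgp$ has no atom at $y_*$, each point is ``corrupted'' (i.e.\ has response $y_*$) independently with probability $p$, the clean points are distributed as $\dgp$, and $\dgp'_X=\dgp_X$ so that the unlabeled data and the test point are unaffected. Write $q=1-p$.

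The key new step is to define $\alg'$ so that it agrees with $\alg$ on clean data but produces instability on corrupted data, all while mapping into $\modelclass$. On any dataset containing no $y_*$, set $\alg'=\alg$. On a dataset $\dataset$ containing at least one $y_*$-point I would construct, recursively in $|\dataset|$ and using the operator $M$ supplied by condition~\eqref{eqn:modelclass_assume}, a model $\mu^{\mathrm{corr}}_{\dataset}\in\modelclass$ that dominates, on the ball $\{\|x\|_2\le L\}$ and by a margin exceeding $\eps$, both every model $\alg[\dataset';\xi]$ with $\dataset'\subseteq\dataset$ and every corrupted model $\mu^{\mathrm{corr}}_{\dataset''}$ attached to a sub-dataset $\dataset''\subsetneq\dataset$ of size $|\dataset|-1$ that itself contains $y_*$. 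This recursion is well founded on dataset size, depends on $\dataset$ only through its multiset of points (so symmetry is preserved), and inherits measurability from the footnote to~\eqref{eqn:modelclass_assume}; by construction $\alg'\in\algspace_{\modelclass}$, and on clean data the transparent test observes exactly the models it would see under $(\alg,\dgp)$.

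Next I would verify non-stability and close the argument. On the event that all of the first $n$ training points are clean (probability $q^n$), $\alg'=\alg$ and the prediction gap exceeds $\eps$ with probability exactly $\delta^*_\eps$; on the complementary event (probability $1-q^n$) at least one of these points is corrupted, so $\muh_n$ is a corrupted model, and the domination property forces $|\muh_n(X_{n+1})-\muh_{n-1}(X_{n+1})|>\eps$ whenever $\|X_{n+1}\|_2\le L$ (the subset clause handles the case in which $\muh_{n-1}$ is clean, the recursive clause the case in which both are corrupted). Choosing $L$ so that $\PP{\|X_{n+1}\|_2\le L}\ge 1-\eta$, the instability probability is at least $(1-\eta)\bigl(1-q^n(1-\delta^*_\eps)\bigr)$, which exceeds $\delta$ once $q^n$ is taken just below $\tfrac{1-\delta}{1-\delta^*_\eps}$ and $\eta$ is small, so $(\alg',\dgp',n)$ is not $(\eps,\delta)$-stable. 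Letting $G$ be the event that the input labeled dataset contains no $y_*$-point, we have $\Pp{\alg',\dgp'}{G}=q^{N_\ell}$, and on $G$ the entire transparent procedure coincides in law with its run under $(\alg,\dgp)$, giving $\Pp{\alg',\dgp'}{E\mid G}=\Pp{\alg,\dgp}{E}$. Applying validity~\eqref{eqn:valid_hat_T_modelclass} to the non-stable triple then yields
\[
	\Pp{\alg,\dgp}{E}\cdot q^{N_\ell}=\Pp{\alg',\dgp'}{E\cap G}\le \Pp{\alg',\dgp'}{E}\le \alpha,
\]
so $\Pp{\alg,\dgp}{E}\le \alpha\,q^{-N_\ell}=\alpha\,(q^{-n})^{\kappa_\ell}$; sending $\eta\downarrow 0$ and $q^n\uparrow \tfrac{1-\delta}{1-\delta^*_\eps}$ gives the claimed bound~\eqref{eqn:power_optimal_transparent}, capped trivially at $1$.

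The main obstacle is the construction of $\alg'$ in the second paragraph. Unlike Theorem~\ref{thm:optimality_transparent}, where the corrupting model is completely unconstrained, here it must lie in $\modelclass$ yet still guarantee a prediction gap above $\eps$ both when only one of $\muh_n,\muh_{n-1}$ sees a corrupted point and when both do. Arranging the iterated domination so that each corrupted model beats all the relevant normal models \emph{and} the corrupted models one size below it, uniformly on a ball large enough to capture the test point and while respecting symmetry and measurability, is the delicate part; condition~\eqref{eqn:modelclass_assume} is precisely what makes this recursion possible.
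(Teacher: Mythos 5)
Your proposal is correct in substance and follows the same strategy as the paper's proof: a phantom label $y_*$, a mixture distribution $\dgp'$ that relabels each point as $y_*$ independently with small probability (preserving the $X$-marginal, so that the change-of-measure exponent is $N_\ell$ rather than $N_\ell+N_u$), a perturbed algorithm $\alg'$ built from the operator $M$ of \eqref{eqn:modelclass_assume} so that $\alg'\in\algspace_{\modelclass}$, a truncation radius $L$ with $\Pp{\dgp_X}{\|X\|_2>L}\leq a$ to absorb the ``for all $\|x\|_2\leq L$'' restriction in \eqref{eqn:modelclass_assume}, and the conditioning step that converts restricted validity \eqref{eqn:valid_hat_T_modelclass} applied to the non-stable triple $(\alg',\dgp',n)$ into the power bound for $(\alg,\dgp,n)$.

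The one place you diverge is the construction of $\alg'$, and your version is more complicated than necessary. You corrupt every dataset containing $y_*$, of any size, which forces you to handle the case where $\muh'_n$ and $\muh'_{n-1}$ are \emph{both} corrupted---hence your well-founded recursion in which each corrupted model must dominate the corrupted models one size below it. The paper avoids this case entirely: its $\alg'$ returns the corrupted model only when $|\dataset|=n$ exactly (and $y_*$ appears), and returns $\algof{\dataset;\xi}$ otherwise. Since $\muh'_{n-1}$ is trained on $n-1$ points, it is never corrupted, so the only comparison needed is corrupted-versus-clean, which a single application of $M$ to the leave-one-out models $\talgof{(\dataset_\sigma)_{-n};\xi}$, $\sigma\in\mathcal{S}_n$, already handles (this also gives symmetry). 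Besides being simpler, this matters for a technical reason you gloss over: the regularity footnote attached to \eqref{eqn:modelclass_assume} guarantees measurability only for maps of the form $((\dataset_1,\xi_1),\dots,(\dataset_m,\xi_m),x)\mapsto M\trbr{\talgof{\dataset_1;\xi_1},\dots,\talgof{\dataset_m;\xi_m}}(x)$, i.e., $M$ applied to outputs of the algorithm itself; your recursion requires nested applications $M(\dots,M(\dots),\dots)$, whose measurability is not covered by that assumption. So the step you flag as the delicate core of the argument is both the weak point of your write-up and entirely avoidable: restrict the corruption to sample size exactly $n$, and the recursion (with its measurability issue) disappears, after which your non-stability computation and the final chain $\Pp{\alg,\dgp}{E}\cdot q^{N_\ell}\leq\Pp{\alg',\dgp'}{E}\leq\alpha$ go through exactly as you wrote them.
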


\noindent This result is proved in Appendix~\ref{app:proofs_transparent}. Note that this is the same bound as in Theorem~\ref{thm:optimality_transparent}---in other 
words, knowing the class of models $\modelclass$ that $\alg$ maps into, and 
requiring our test to only be valid with respect to this class, leads to {\em no improvement} in the upper bound on power.
Ultimately, the challenge of the black-box testing framework lies in the fact that it is the model fitting process $\alg$ itself which is a black box. 
Comparing our various results, we conclude
 that
treating the fitted functions $\muh$ as black boxes (as in Theorem~\ref{thm:optimality}), or as transparent or constrained functions
(as  in Theorems~\ref{thm:optimality_transparent} and~\ref{thm:optimality_constrained}), has little effect on the difficulty of the problem.

\section{Discussion}\label{sec:discussion}

In this work, we have shown a universal bound on the power achievable by any black-box test of stability (Theorem~\ref{thm:optimality}), and have seen that, surprisingly, in some regimes the maximum power is achieved by a simple Binomial test that only calls the algorithm twice for each batch of $n$ labeled data points and one unlabeled test point (Theorem~\ref{thm:simple_test}). This shows that, in a setting with limited available data, it is essentially impossible to test stability in the black-box setting with power that's substantially better than random. To conclude, we will now discuss some open questions and further directions for exploration.

\subsection{The role of uncountability, revisited} \label{sec:discussion_uncountable_part2}

As discussed in Section~\ref{sec:discussion_uncountable} earlier, our upper bound in Theorem~\ref{thm:optimality} relies on uncountability of the feature space $\R^d$ and the response space $\R$. If instead the data lies in some space $\X \times \Y$, where only $\Y$ or only $\X$ is uncountable, then the partial results~\eqref{eqn:power_optimal_Ycal} and~\eqref{eqn:power_optimal_Xcal} can instead be obtained. On the other hand, if $\X$ and $\Y$ are {\em both} countable, then none of our upper bounds apply, since we can exhaustively test all possible datasets with countably many calls to $\alg$.

Note that this setting is different than simply assuming that $X$ or $Y$ is discrete but not assuming a known support. For example, if we assume $(X,Y) \in \R^d \times \R$ is discrete but has an unknown support, then even if we require the validity statement~\eqref{eqn:valid_hat_T} to hold only for triples $(\alg,\dgp,n)$ where $\dgp$ is a discrete distribution on $\R^d \times \R$, the same upper bound on power given in Theorem~\ref{thm:optimality} will still hold.

However, it is worth noting that our definition of a black-box test allowed for countably infinitely many calls to $\alg$ and/or evaluations of fitted models $\muh$, which is of course not feasible in practice. An interesting question remains: if $\X$ and $\Y$ are countable but have large (or infinite) supports, then we would expect that a {\em finite} computational budget for our black-box test would lead to similar bounds on power. In particular, in any practical setting, real-valued data are stored in a discretized way, e.g., with floating-point precision. This means that all spaces are, technically, countable, but the support size is so vast that we expect the problem to behave essentially like the uncountable setting because computational constraints prohibit us from trying every possible value (at floating-point precision) of the data.
Our proof technique for establishing the power bound of Theorem~\ref{thm:optimality} could then be extended
to this setting, to quantify the tradeoff between the support size of $\X$ and/or $\Y$ versus the computational budget bounding how many times $\alg$ can 
be called. We leave this important question for future work.

\subsection{Data-conditional stability}

We have just seen that it is essentially impossible to test whether $\alg$ is stable at sample size $n$, if $N_\ell=n$ is the number of labeled data points available for the test. However, if the reason that we want to test stability is for verifying that predictive inference methods such as jackknife can be applied to $\alg$, testing whether $(\alg,\dgp,n)$ is $(\eps,\delta)$-stable may be too strong of a goal. Specifically, given a training dataset $(X_1,Y_1), \dots, (X_n,Y_n)$ and a test point $(X_{n+1},Y_{n+1})$ (where $Y_{n+1}$ is of course unobserved, since it is the target of predictive inference), examining the proof of \citet[Theorem 5]{Barber2021Predictive} we see that the predictive coverage of the jackknife method relies on bounding
\begin{equation} \label{eqn:stability_for_jackknife}
	\abr{\muh_{n;-i}(X_i) - \muh_{n+1;-i}(X_i)}
\end{equation}
where $\muh_{n;-i}$ denotes the model fitted by running $\alg$ on data points $\big((X_j,Y_j): 1\leq j \leq n, \ j \neq i\big)$ and $\muh_{n+1;-i}$ denotes the model fitted by running $\alg$ on data points $\big((X_j,Y_j): 1 \leq j \leq n+1, \ j \neq i\big)$.

Requiring stability of $(\alg,\dgp,n)$, as in Definition~\ref{def:stability}, ensures that the quantity~\eqref{eqn:stability_for_jackknife} is small with high probability {\em over a random draw of the entire dataset}. On the other hand, when using stability in the proof of the jackknife's predictive coverage, we only need to know that the quantity~\eqref{eqn:stability_for_jackknife} is small {\em on the actual dataset}, which we are able to observe with the exception of the last point $n+1$. In other words, we only need to test whether $\alg$ is stable with respect to a random draw of this single last data point (and we can condition on the observed data). This notion of ``data-conditional stability'' appears to be a weaker property than the (unconditional) stability property of Definition~\ref{def:stability}, and therefore it may be possible to test this conditional property with higher power than the bounds established in Theorem~\ref{thm:optimality}. We leave this question for future work.

\subsection*{Acknowledgements}

B.K. acknowledges support from the National Institutes of Health via grants R01GM114029 and R01GM133848.
R.F.B. was supported by the National Science Foundation via grants DMS-1654076 and DMS-2023109, and by the Office of Naval Research via grant N00014-20-1-2337.

\bibliography{stability}

\begin{thebibliography}{17}
\providecommand{\natexlab}[1]{#1}
\providecommand{\url}[1]{\texttt{#1}}
\expandafter\ifx\csname urlstyle\endcsname\relax
  \providecommand{\doi}[1]{doi: #1}\else
  \providecommand{\doi}{doi: \begingroup \urlstyle{rm}\Url}\fi

\bibitem[Barber et~al.(2021)Barber, Cand{\`e}s, Ramdas, and
  Tibshirani]{Barber2021Predictive}
Rina~Foygel Barber, Emmanuel~J. Cand{\`e}s, Aaditya Ramdas, and Ryan~J.
  Tibshirani.
\newblock Predictive inference with the jackknife+.
\newblock \emph{The Annals of Statistics}, 49\penalty0 (1):\penalty0 486--507,
  2021.
\newblock \doi{10.1214/20-AOS1965}.
\newblock URL \url{https://doi.org/10.1214/20-AOS1965}.

\bibitem[Bousquet and Elisseeff(2002)]{Bousquet2002Stability}
Olivier Bousquet and Andr{\'e} Elisseeff.
\newblock Stability and generalization.
\newblock \emph{Journal of Machine Learning Research}, 2:\penalty0 499--526,
  2002.

\bibitem[Deng et~al.(2021)Deng, He, and Su]{Deng2021Toward}
Zhun Deng, Hangfeng He, and Weijie Su.
\newblock Toward better generalization bounds with locally elastic stability.
\newblock In Marina Meila and Tong Zhang, editors, \emph{Proceedings of the
  38th International Conference on Machine Learning}, volume 139 of
  \emph{Proceedings of Machine Learning Research}, pages 2590--2600. PMLR,
  2021.
\newblock URL \url{https://proceedings.mlr.press/v139/deng21b.html}.

\bibitem[Devroye and Wagner(1979{\natexlab{a}})]{Devroye1979a}
L.~Devroye and T.~Wagner.
\newblock Distribution-free inequalities for the deleted and holdout error
  estimates.
\newblock \emph{IEEE Transactions on Information Theory}, 25\penalty0
  (2):\penalty0 202--207, 1979{\natexlab{a}}.
\newblock \doi{10.1109/TIT.1979.1056032}.

\bibitem[Devroye and Wagner(1979{\natexlab{b}})]{Devroye1979b}
L.~Devroye and T.~Wagner.
\newblock Distribution-free performance bounds for potential function rules.
\newblock \emph{IEEE Transactions on Information Theory}, 25\penalty0
  (5):\penalty0 601--604, 1979{\natexlab{b}}.
\newblock \doi{10.1109/TIT.1979.1056087}.

\bibitem[Elisseeff et~al.(2005)Elisseeff, Evgeniou, and
  Pontil]{Elisseeff2005Stability}
Andre Elisseeff, Theodoros Evgeniou, and Massimiliano Pontil.
\newblock Stability of randomized learning algorithms.
\newblock \emph{Journal of Machine Learning Research}, 6\penalty0 (3):\penalty0
  55--79, 2005.
\newblock URL \url{http://jmlr.org/papers/v6/elisseeff05a.html}.

\bibitem[Hardt et~al.(2016)Hardt, Recht, and Singer]{Hardt2016Train}
Moritz Hardt, Ben Recht, and Yoram Singer.
\newblock Train faster, generalize better: Stability of stochastic gradient
  descent.
\newblock In Maria~Florina Balcan and Kilian~Q. Weinberger, editors,
  \emph{Proceedings of The 33rd International Conference on Machine Learning},
  volume~48 of \emph{Proceedings of Machine Learning Research}, pages
  1225--1234, New York, New York, USA, 2016. PMLR.
\newblock URL \url{https://proceedings.mlr.press/v48/hardt16.html}.

\bibitem[Kearns and Ron(1999)]{Kearns1999Algorithmic}
Michael Kearns and Dana Ron.
\newblock Algorithmic stability and sanity-check bounds for leave-one-out
  cross-validation.
\newblock \emph{Neural Computation}, 11:\penalty0 1427--1453, 1999.

\bibitem[Kutin and Niyogi(2002{\natexlab{a}})]{Kutin2002a}
Samuel Kutin and Partha Niyogi.
\newblock Almost-everywhere algorithmic stability and generalization error.
\newblock Technical Report TR-2002-03, Department of Computer Science, The
  University of Chicago, 2002{\natexlab{a}}.

\bibitem[Kutin and Niyogi(2002{\natexlab{b}})]{Kutin2002b}
Samuel Kutin and Partha Niyogi.
\newblock Almost-everywhere algorithmic stability and generalization error.
\newblock In \emph{Proceedings of the Eighteenth Conference on Uncertainty in
  Artificial Intelligence}, UAI'02, pages 275--282, San Francisco, CA, USA,
  2002{\natexlab{b}}. Morgan Kaufmann Publishers Inc.
\newblock ISBN 1558608974.

\bibitem[Mukherjee et~al.(2006)Mukherjee, Niyogi, Poggio, and
  Rifkin]{Mukherjee2006Learning}
Sayan Mukherjee, Partha Niyogi, Tomaso Poggio, and Ryan Rifkin.
\newblock Learning theory: Stability is sufficient for generalization and
  necessary and sufficient for consistency of empirical risk minimization.
\newblock \emph{Advances in Computational Mathematics}, 25\penalty0
  (1):\penalty0 161--193, 2006.
\newblock \doi{10.1007/s10444-004-7634-z}.
\newblock URL \url{https://doi.org/10.1007/s10444-004-7634-z}.

\bibitem[Myers et~al.(2011)Myers, Sandler, and Badgett]{Myers2011Art}
Glenford~J. Myers, Corey Sandler, and Tom Badgett.
\newblock \emph{The Art of Software Testing}.
\newblock Wiley Publishing, 3rd edition, 2011.
\newblock ISBN 1118031962.

\bibitem[Nikolakakis et~al.(2022)Nikolakakis, Haddadpour, Kalogerias, and
  Karbasi]{Nikolakakis2022Blackbox}
Konstantinos Nikolakakis, Farzin Haddadpour, Dionysios Kalogerias, and Amin
  Karbasi.
\newblock Black-box generalization: Stability of zeroth-order learning.
\newblock In Alice~H. Oh, Alekh Agarwal, Danielle Belgrave, and Kyunghyun Cho,
  editors, \emph{Advances in Neural Information Processing Systems}, 2022.
\newblock URL \url{https://openreview.net/forum?id=vdh62914QR}.

\bibitem[Rogers and Wagner(1978)]{Rogers1978Finite}
W.~H. Rogers and T.~J. Wagner.
\newblock A finite sample distribution-free performance bound for local
  discrimination rules.
\newblock \emph{The Annals of Statistics}, 6\penalty0 (3):\penalty0 506--514, 5
  1978.
\newblock \doi{10.1214/aos/1176344196}.
\newblock URL \url{https://doi.org/10.1214/aos/1176344196}.

\bibitem[Shalev-Shwartz et~al.(2010)Shalev-Shwartz, Shamir, Srebro, and
  Sridharan]{ShalevShwartz2010Learnability}
Shai Shalev-Shwartz, Ohad Shamir, Nathan Srebro, and Karthik Sridharan.
\newblock Learnability, stability and uniform convergence.
\newblock \emph{Journal of Machine Learning Research}, 11\penalty0
  (90):\penalty0 2635--2670, 2010.
\newblock URL \url{http://jmlr.org/papers/v11/shalev-shwartz10a.html}.

\bibitem[Steinberger and Leeb(2016)]{Steinberger2016LeaveOneOut}
Lukas Steinberger and Hannes Leeb.
\newblock Leave-one-out prediction intervals in linear regression models with
  many variables, 2016.
\newblock arXiv preprint.

\bibitem[Steinberger and Leeb(2020)]{Steinberger2020Conditional}
Lukas Steinberger and Hannes Leeb.
\newblock Conditional predictive inference for high-dimensional stable
  algorithms, 2020.
\newblock arXiv preprint.

\end{thebibliography}
\bibliographystyle{plainnat}

\appendix
\section{Proof of Theorem~\ref{thm:simple_test}} \label{app:proof_thm:simple_test}

As calculated in Section~\ref{sec:simple_test}, we have shown that, for any algorithm $\alg$ and distribution $\dgp$, it holds that $B \sim \Binomial(\lfloor\kappa\rfloor, \delta^*_\eps)$ (note that this distribution depends on $(\alg,\dgp,n)$ implicitly via the parameters $\kappa$ and $\delta^*_\eps$).

Next,we check validity, meaning that we need to show that whenever $\test^*_{\eps,\delta} = 0$, we have $\tPP{\widehat{\test}_{\eps,\delta} = 1} \leq \alpha$. By definition, $\test^*_{\eps,\delta} = 0$ implies that $\delta^*_\eps > \delta$, and so we have
\begin{align*}
	\PP{\widehat{\test}_{\eps,\delta} = 1}
	&= \PP{B < k^*_{\kappa,\delta}} + a^*_{\kappa,\delta} \cdot \PP{B = k^*_{\kappa,\delta}}\\
	&= (1-a^*_{\kappa,\delta}) \cdot \PP{B < k^*_{\kappa,\delta}} + a^*_{\kappa,\delta} \cdot \PP{B \leq k^*_{\kappa,\delta}}\\
	&\leq (1-a^*_{\kappa,\delta}) \cdot \PP{\Binomial(\lfloor\kappa\rfloor, \delta) < k^*_{\kappa,\delta}} + a^*_{\kappa,\delta} \cdot \PP{\Binomial(\lfloor\kappa\rfloor, \delta)\leq k^*_{\kappa,\delta}}\\
	&=\alpha.
\end{align*}
In the above, the inequality of the next-to-last step is because $\delta^*_\eps > \delta$, and hence, the $\Binomial(\lfloor\kappa\rfloor, \delta)$ distribution is stochastically smaller than the distribution of $B \sim \Binomial(\lfloor\kappa\rfloor, \delta^*_\eps)$. The equality of the last step is a direct consequence of the definitions of $k^*_{\kappa,\delta}$ and of $a^*_{\kappa,\delta}$.

Finally, we calculate power in the case that $T^*_{\eps,\delta} = 1$, for the special case $\delta\leq 1-\alpha^{1/\lfloor\kappa\rfloor}$ or $\delta^*_\eps=0$. If $\delta \leq 1-\alpha^{1/\lfloor\kappa\rfloor}$, then $k^*_{\kappa,\delta} = 0$, and so we have
\begin{multline*}
	\PP{\widehat{\test}_{\eps,\delta} = 1}
	= a^*_{\kappa,\delta} \cdot \PP{B = 0}
	= a^*_{\kappa,\delta} \cdot \PP{\Binomial(\lfloor\kappa\rfloor, \delta^*_\eps) = 0}\\
	= \alpha \cdot \frac{\PP{\Binomial(\lfloor\kappa\rfloor, \delta^*_\eps) = 0}}{\PP{\Binomial(\lfloor\kappa\rfloor, \delta) = 0}}
	= \alpha \cdot \rbr{\frac{1-\delta^*_\eps}{1-\delta}}^{\lfloor\kappa\rfloor}
	= \cbr{\alpha \cdot \rbr{\frac{1-\delta^*_\eps}{1-\delta}}^{\lfloor\kappa\rfloor}} \wedge 1,
\end{multline*}
where the third equality holds by definition of $a^*_{\kappa,\delta}$, and the last equality holds since $\delta \leq 1-\alpha^{1/\lfloor\kappa\rfloor}$ implies $\alpha \cdot \tcbr{\trbr{1-\delta^*_\eps} / \trbr{1-\delta}}^{\lfloor\kappa\rfloor} \leq {\alpha} / {(1-\delta)^{\lfloor \kappa\rfloor}} \leq 1$. Otherwise, we have that $\delta^*_\eps=0$ but $\delta > 1-\alpha^{1/\lfloor\kappa\rfloor}$. This implies $B = 0$ almost surely and $k^*_{\kappa,\delta} > 0$. Therefore,
\[
	\PP{\widehat{\test}_{\eps,\delta} = 1}
	= \PP{B < k^*_{\kappa,\delta}} + a^*_{\kappa,\delta} \cdot \PP{B = k^*_{\kappa,\delta}}
	= 1
	= \cbr{\alpha \cdot \rbr{\frac{1-\delta^*_\eps}{1-\delta}}^{\lfloor\kappa\rfloor}} \wedge 1,
\]
where the last step holds since $\delta^*_\eps = 0$ and $\delta > 1-\alpha^{1/\lfloor\kappa\rfloor}$ implies $\alpha \cdot \tcbr{\trbr{1-\delta^*_\eps} / \trbr{1-\delta}}^{\lfloor\kappa\rfloor} = {\alpha} / {(1-\delta)^{\lfloor \kappa\rfloor}} > 1$.

\section{Proof of Theorem~\ref{thm:optimality}} \label{app:proof_thm:optimality}

To prove~\eqref{eqn:power_optimal}, by definition of $\kappa$, it is equivalent to show that
\begin{equation} \label{eqn:power_optimal_labeled}
	\PP{\widehat{\test}_{\eps,\delta}(\alg,\dataset_\ell,\dataset_u) = 1} \leq \alpha \cdot \rbr{\frac{1-\delta^*_\eps}{1-\delta}}^{N_\ell / n}
\end{equation}
and that 
\begin{equation} \label{eqn:power_optimal_unlabeled}
	\PP{\widehat{\test}_{\eps,\delta}(\alg,\dataset_\ell,\dataset_u) = 1} \leq \alpha \cdot \rbr{\frac{1-\delta^*_\eps}{1-\delta}}^{(N_\ell+N_u) / (n+1)}.
\end{equation}
The first inequality~\eqref{eqn:power_optimal_labeled} shows the bound on power that is due to the limited amount of labeled data, while the second inequality~\eqref{eqn:power_optimal_unlabeled} shows the bound on power that is due to the limited amount of unlabeled data (since the total available number of draws of $X$ is equal to $N_\ell+N_u$).

The proofs of~\eqref{eqn:power_optimal_labeled} and~\eqref{eqn:power_optimal_unlabeled} both follow the same general recipe. The key idea of each proof is to construct an algorithm $\alg'$ and a data distribution $\dgp'$ where $(\alg',\dgp',n)$ is not $(\eps,\delta)$-stable, but with $\alg'$ and $\dgp'$ sufficiently similar to $\alg$ and $\dgp$ so that $(\alg,\dgp,n)$ and $(\alg',\dgp',n)$ are difficult to distinguish using the limited available data. The constructions will follow a shared structure:

\begin{itemize}

\item We will construct $\alg'$ and $\dgp'$ so that
\begin{equation} \label{eqn:not_stable}
	\textnormal{$(\alg',\dgp',n)$ does not satisfy $(\eps,\delta)$-stability,}
\end{equation}
and therefore, since $\widehat{\test}_{\eps,\delta}$ satisfies the validity condition~\eqref{eqn:valid_hat_T}, we must have
\begin{equation} \label{eqn:typeI_alg_prime}
	\PP{\widehat{\test}_{\eps,\delta}(\alg', \dataset_\ell', \dataset_u') = 1} \leq \alpha,
\end{equation}
where $(\dataset_\ell', \dataset_u')$ are drawn from $\dgp'$ (i.e., $\dataset_\ell'$ consists of $N_\ell$ many labeled points drawn i.i.d.~from $\dgp'$, and $\dataset_u'$ consists of $N_u$ many unlabeled points drawn i.i.d.~from the marginal distribution $\dgp_X'$).

\item Distribution $\dgp'$ is constructed to be similar to $\dgp$, by defining it as a mixture model of the form
\[
	\dgp' = (1-c) \cdot \dgp + c \cdot \dgp_1
\]
for some other distribution $\dgp_1$ on $\R^d \times \R$.

With this definition, we can couple the processes of drawing data from $\dgp$ and drawing data from $\dgp'$. Specifically, suppose datasets $(\dataset_\ell, \dataset_u)$ are drawn from $\dgp$ and datasets $(\dataset_\ell', \dataset_u')$ are drawn from $\dgp'$. Then, by definition of $\dgp'$, we can construct a coupling of $(\dataset_\ell, \dataset_u)$ and $(\dataset_\ell', \dataset_u')$ such that
\begin{equation} \label{eqn:same_data_unlabeled}
	\PPst{(\dataset_\ell', \dataset_u') = (\dataset_\ell, \dataset_u)}{(\dataset_\ell, \dataset_u)} \geq (1-c)^{N_\ell+N_u}.
\end{equation}
Moreover, if $\dgp_1$ and $\dgp$ have the same marginal distribution of $X$, then the two distributions differ only on the labeled samples, and so we will instead have
\begin{equation} \label{eqn:same_data_labeled}
	\PPst{(\dataset_\ell', \dataset_u') = (\dataset_\ell, \dataset_u)}{(\dataset_\ell, \dataset_u)} \geq (1-c)^{N_\ell}.
\end{equation}

\item Algorithm $\alg'$ is constructed to be similar to $\alg$, in the following sense. For datasets $(\dataset_\ell, \dataset_u)$ drawn from $\dgp$, we will design $\alg'$ to satisfy
\begin{equation} \label{eqn:same_T_hat}
	\PP{\widehat{\test}_{\eps,\delta}(\alg, \dataset_\ell, \dataset_u) = \widehat{\test}_{\eps,\delta}(\alg', \dataset_\ell, \dataset_u)} = 1,
\end{equation}
i.e., running the test $\widehat{\test}_{\eps,\delta}$ using algorithm $\alg$ or using algorithm $\alg'$ will result in the same answer almost surely
when $\widehat{T}_{\eps,\delta}$ is computed using the same randomization terms $\zeta^{(1)}, \zeta^{(2)}, \dots, \zeta$ (see Definition~\ref{def:black_box_test}). (On the other hand, $\alg'$ might not yield the same result as $\alg$ for some adversarial choices of the data that have zero probability under $\dgp$.)

Combining~\eqref{eqn:same_T_hat} with~\eqref{eqn:typeI_alg_prime} and~\eqref{eqn:same_data_unlabeled} yields
\begin{align*}
	\alpha
	&\geq \PP{\widehat{\test}_{\eps,\delta}(\alg', \dataset_\ell', \dataset_u') = 1}\\
	&\geq \PP{\widehat{\test}_{\eps,\delta}(\alg', \dataset_\ell, \dataset_u) = 1,\ (\dataset_\ell', \dataset_u') = (\dataset_\ell,\dataset_u)}\\
	&= \PP{\widehat{\test}_{\eps,\delta}(\alg', \dataset_\ell, \dataset_u) = 1} \cdot \PPst{(\dataset_\ell', \dataset_u') = (\dataset_\ell,\dataset_u)}{\widehat{\test}_{\eps,\delta}(\alg', \dataset_\ell, \dataset_u) = 1}\\
	&\geq \PP{\widehat{\test}_{\eps,\delta}(\alg, \dataset_\ell, \dataset_u) = 1} \cdot (1-c)^{N_\ell+N_u}.
\end{align*}
Here, the last step holds since 
\begin{multline*}
	\PPst{(\dataset_\ell',\dataset_u') = (\dataset_\ell, \dataset_u)}{\widehat{\test}_{\eps,\delta}(\alg', \dataset_\ell, \dataset_u) = 1}\\
	\begin{aligned}[t]
	&= \EEst{\PPst{(\dataset_\ell', \dataset_u') = (\dataset_\ell, \dataset_u)}{(\dataset_\ell, \dataset_u), \zeta^{(1)}, \zeta^{(2)}, \dots, \zeta}}{\widehat{\test}_{\eps,\delta}(\alg', \dataset_\ell, \dataset_u) = 1}\\
	&= \EEst{\PPst{(\dataset_\ell', \dataset_u') = (\dataset_\ell, \dataset_u)}{(\dataset_\ell, \dataset_u)}}{\widehat{\test}_{\eps,\delta}(\alg', \dataset_\ell, \dataset_u) = 1}\\
	&\geq \EEst{(1-c)^{N_\ell+N_u}}{\widehat{\test}_{\eps,\delta}(\alg', \dataset_\ell, \dataset_u) = 1}\\
	&= (1-c)^{N_\ell+N_u},
	\end{aligned}
\end{multline*}
where the second step holds since $(\dataset_\ell', \dataset_u')$ is independent of $\zeta^{(1)}, \zeta^{(2)}, \dots, \zeta$ conditional on $(\dataset_\ell,\dataset_u)$ by construction, while the third step applies~\eqref{eqn:same_data_unlabeled}. Rearranging terms in our work above, we have therefore established that
\begin{equation} \label{eqn:compare_power_unlabeled}
	\PP{\widehat{\test}_{\eps,\delta}(\alg, \dataset_\ell, \dataset_u) = 1}\leq \alpha \cdot (1-c)^{-(N_\ell+N_u)}.
\end{equation}
For the case that $\dgp_1$ and $\dgp$ have the same marginal distribution of $X$, combining~\eqref{eqn:same_T_hat} with~\eqref{eqn:typeI_alg_prime} and~\eqref{eqn:same_data_labeled} yields 
\[
	\alpha \geq \PP{\widehat{\test}_{\eps,\delta}(\alg', \dataset_\ell', \dataset_u') = 1}
	\geq \PP{\widehat{\test}_{\eps,\delta}(\alg, \dataset_\ell, \dataset_u) = 1} \cdot (1-c)^{N_\ell}
\]
with an analogous calculation, and so for this case, we have established that
\begin{equation} \label{eqn:compare_power_labeled}
	\PP{\widehat{\test}_{\eps,\delta}(\alg, \dataset_\ell, \dataset_u) = 1}\leq \alpha \cdot (1-c)^{-N_\ell}.
\end{equation}

\end{itemize}

Finally, to prove the bound~\eqref{eqn:power_optimal_labeled}, by examining~\eqref{eqn:compare_power_labeled} we see that it suffices to prove that
\begin{multline} \label{eqn:power_optimal_labeled_to_show}
	\textnormal{For any $c > 1 - \rbr{\frac{1-\delta}{1-\delta^*_\eps}}^{1/n}$, we can choose $\dgp_1$ and $\alg'$ such that}\\
	\textnormal{$(\dgp_1)_X = \dgp_X$ and both~\eqref{eqn:not_stable} and~\eqref{eqn:same_T_hat} hold.}
\end{multline}
Similarly, to prove the bound~\eqref{eqn:power_optimal_unlabeled}, by examining~\eqref{eqn:compare_power_unlabeled} we see that it suffices to prove that
\begin{multline} \label{eqn:power_optimal_unlabeled_to_show}
	\textnormal{For any $c > 1-\rbr{\frac{1-\delta}{1-\delta^*_\eps}}^{1/(n+1)}$, we can choose $\dgp_1$ and $\alg'$ such that}\\
	\textnormal{both~\eqref{eqn:not_stable} and~\eqref{eqn:same_T_hat} hold.}
\end{multline}
To complete the proof, we will now give explicit constructions of $\dgp_1$ and $\alg'$ to verify each of these claims.

\subsection{Proof of~\eqref{eqn:power_optimal_labeled_to_show}} \label{app:proof_thm:optimality_labeled}

Consider the datasets $(\dataset_\ell, \dataset_u)$ drawn from $\dgp$, and the datasets $(\dataset_\ell^{(1)}, \dataset_u^{(1)}), (\dataset_\ell^{(2)}, \dataset_u^{(2)}), \dots$ generated by running the test $\widehat{\test}_{\eps,\delta}$ on input datasets $(\dataset_\ell, \dataset_u)$ using algorithm $\alg$. Write 
\[
	\dataset_\ell = \big((X_1,Y_1), \dots, (X_{N_\ell},Y_{N_\ell})\big), \quad \dataset_u = \big(X_{N_\ell+1}, \dots, X_{N_\ell+N_u}\big),
\]
and for each $r \geq 1$ write
\[
	\dataset_\ell^{(r)} = \big((X^{(r)}_1,Y^{(r)}_1), \dots, (X^{(r)}_{N^{(r)}_\ell},Y^{(r)}_{N^{(r)}_\ell})\big), \quad \dataset_u^{(r)} = \big(X^{(r)}_{N^{(r)}_\ell+1}, \dots, X^{(r)}_{N^{(r)}_\ell+N^{(r)}_u}\big).
\]
(Note that $N^{(r)}_\ell, N^{(r)}_u$ might be random, but are required to be finite almost surely.)

Define 
\[
	\Ycal = \cbr{Y_1, \dots, Y_{N_\ell}, Y^{(1)}_1, \dots, Y^{(1)}_{N^{(1)}_\ell}, Y^{(2)}_1, \dots} \subset \R,
\]
the set of all $Y$ values that are observed at any point when running the test. We claim that there exists some $y_* \in \R$ such that 
\begin{equation} \label{eqn:y_star}
	\PP{y_* \in \Ycal} = 0.
\end{equation}
To see why, consider any $y \in \R$. Then, by the union bound,
\[
	\PP{y \in \Ycal} \leq \sum_{i = 1}^n \PP{Y_i = y} + \sum_{r \geq 1}\sum_{i \geq 1} \PP{N^{(r)}_\ell \geq i \textnormal{ and } Y^{(r)}_i = y}.
\]
Clearly, for each term in the sum, there are at most countably infinitely many values $y \in \R$ for which the corresponding probability term is positive. Therefore in total, there are only countably many $y \in \R$ for which the probability $\tPP{y \in \Ycal}$ is positive. This implies that some $y_*$ satisfying~\eqref{eqn:y_star} must exist.

Next, we define $\dgp' = (1-c) \cdot \dgp + c \cdot \dgp_1$ for
\[
	\dgp_1 = \dgp_X \times \delta_{y_*},
\]
where $\delta_{y_*}$ is the point mass at $y_*$. In other words, $\dgp_1$ is defined by drawing $X$ from $\dgp_X$ (the marginal distribution of $X$ under $\dgp$), and then setting $Y = y_*$ deterministically. Clearly, $\dgp_1$ and $\dgp$ have the same marginal distribution of $X$ by definition.

Next, we define the algorithm $\alg'$. For a labeled dataset $\dataset = \big((x_1,y_1), \dots, (x_m,y_m)\big)$, let $|\dataset| = m$ be its cardinality, and let $\dataset_{-m} = \big((x_1,y_1), \dots, (x_{m-1},y_{m-1})\big)$ be the dataset with the last point removed, and for any $\sigma \in \mathcal{S}_m$ (where $\mathcal{S}_m$ is the set of permutations on $\{1, \dots, m\}$), let $\dataset_\sigma$ denote the dataset permuted via $\sigma$, i.e., $\big((x_{\sigma(1)},y_{\sigma(1)}),\dots,(x_{\sigma(m)},y_{\sigma(m)})\big)$. $\alg'[\dataset; \xi]$ is then defined as follows. For any test point $x \in \R^d$, the outputted prediction is given by
\[
	\alg'\sbr{\dataset; \xi}(x) =
	\begin{cases}
	\alg_1\sbr{\dataset; \xi}(x) & \textnormal{ if $|\dataset| = n$ and $\sum_{i = 1}^n \One{y_i = y_*} \geq 1$,}\\
	\algof{\dataset; \xi}(x) & \textnormal{ otherwise},
	\end{cases}
\]
where
\begin{equation} \label{eqn:define_Alg1_labeled}
	\alg_1\sbr{\dataset; \xi}(x) = 1 + \eps + \max_{\sigma\in\mathcal{S}_m}\algof{(\dataset_\sigma)_{-m}; \xi}(x).
\end{equation}
This ensures that if $y_* \notin \dataset$ (i.e., the label $y_*$ is not observed in the data), then the two algorithms $\alg$ and $\alg'$ yield the same output. On the other hand, if the label $y_*$ is observed at least once in a sample of size $n$, then $\alg'$ returns a perturbed output. (Maximizing over permutations $\sigma$ ensures that $\alg_1$, and therefore $\alg'$, is a symmetric algorithm.)

Let $A_1, \dots, A_n \iidsim \operatorname{Bernoulli}(c)$ be drawn independently from the data $(X_i,Y_i) \iidsim \dgp$ and the randomization term $\xi \sim \Uniform[0,1]$. Then, defining
\[
	(X'_i,Y'_i) = \begin{cases}(X_i,Y_i) & \text{ if } A_i = 0,\\ (X_i,y_*) & \text{ if } A_i = 1,\end{cases}
\]
for $i = 1, \dots, n$ yields $n$ i.i.d.~draws from $\dgp'$. Defining
\[
	\muh'_n = \alg'\sbr{(X_1',Y_1'), \dots, (X_n',Y_n'); \xi}, \quad \muh'_{n-1} = \alg'\sbr{(X_1',Y_1'), \dots, (X_{n-1}',Y_{n-1}'); \xi},
\]
we see that by definition of $\alg'$, if $\sum_{i = 1}^n \tOne{Y_i' = y_*} \geq 1$ then we must have $|\muh'_n(X_{n+1}') - \muh'_{n-1}(X_{n+1}')| \geq 1+\eps > \eps$, i.e.,
\[
	\PP{\abr{\muh'_n(X_{n+1}') - \muh'_{n-1}(X_{n+1}')} > \eps\ \left\vert\ \sum_{i = 1}^n \One{Y_i' = y_*} \geq 1 \right.} = 1.
\]
Moreover, by definition of $\dgp'$ together with the fact that $\tPP{Y = y_*} = 0$ for $Y \sim \dgp$ (since $\tPP{y_* \in \Ycal} = 0$), we see that
\[
	\PP{\sum_{i = 1}^n \One{Y_i' = y_*} \geq 1}
	= \PP{\sum_{i = 1}^n \One{A_i = 1} \geq 1}
	= 1-(1-c)^n.
\]
Conversely, $\sum_{i=1}^n \tOne{Y_i'=y_*}=0$ holds if and only if $\sum_{i=1}^n \tOne{A_i=1} = 0$, and in this setting, by definition of $\dgp'$ and $\alg'$, we have
\begin{multline*}
	\PP{\abr{\muh'_n(X_{n+1}') - \muh'_{n-1}(X_{n+1}')} > \eps,\ \sum_{i = 1}^n \One{Y_i' = y_*} = 0}\\
	\begin{aligned}[t]
	&=\PP{\abr{\muh_n(X_{n+1}) - \muh_{n-1}(X_{n+1})} > \eps,\ \sum_{i = 1}^n \One{A_i = 1} = 0}\\
	&=\PP{\abr{\muh_n(X_{n+1}) - \muh_{n-1}(X_{n+1})} > \eps} \PP{\sum_{i = 1}^n \One{A_i = 1} = 0}
	=\delta^*_\eps (1-c)^n,
	\end{aligned}
\end{multline*}
where in the last step, we have used the definition of $\delta^*_\eps$, while the next-to-last step uses the fact that $A_1, \dots, A_n$ are independent from $(X_1,Y_1), \dots, (X_n,Y_n), \xi$ by construction, and are therefore independent from $\muh_n(X_{n+1}), \muh_{n-1}(X_{n+1})$. Combining these calculations, we have
\begin{align*}
&	\PP{\abr{\muh'_n(X_{n+1}') - \muh'_{n-1}(X_{n+1}')} > \eps}\\
	&=
	\begin{multlined}[t]
	\PP{\abr{\muh'_n(X_{n+1}') - \muh'_{n-1}(X_{n+1}')} > \eps,\ \sum_{i = 1}^n \One{Y_i' = y_*} \geq 1}\\
	+ \PP{\abr{\muh'_n(X_{n+1}') - \muh'_{n-1}(X_{n+1}')} > \eps,\ \sum_{i = 1}^n \One{Y_i' = y_*} = 0}
	\end{multlined}\\
	&= 1 - (1-c)^n + \delta^*_\eps (1-c)^n
	= 1 - (1-\delta^*_\eps) (1-c)^n
	> \delta,
\end{align*}
where the last step holds by our assumption on $c$. Therefore, $(\alg',\dgp',n)$ is not $(\eps,\delta)$-stable, which verifies~\eqref{eqn:not_stable}.

Finally, we verify~\eqref{eqn:same_T_hat}, i.e., we need to check that $\widehat{\test}_{\eps,\delta}$ will return the same answer almost surely using $\alg'$ as using $\alg$ when the input data is drawn from $\dgp$. By Definition~\ref{def:black_box_test}, we know that
\[
	\widehat{\test}_{\eps,\delta}(\alg, \dataset_\ell, \dataset_u) = g\sbr{\dataset_\ell, \dataset_u, \Big(\dataset_\ell^{(r)}\Big)_{r \geq 1}, \Big(\dataset_u^{(r)}\Big)_{r \geq 1}, \rbr{\Yhat^{(r)}}_{r \geq 1}, \Big(\zeta^{(r)}\Big)_{r \geq 1}, \Big(\xi^{(r)}\Big)_{r \geq 1}, \zeta}
\]
for some function $g$, where for each $r \geq 1$,
\[
	\rbr{\dataset_\ell^{(r)}, \dataset_u^{(r)}, \xi^{(r)}} = f^{(r)}\sbr{\dataset_\ell, \dataset_u, \Big(\dataset_\ell^{(s)}\Big)_{s = 1}^{r-1}, \Big(\dataset_u^{(s)}\Big)_{s = 1}^{r - 1}, \rbr{\Yhat^{(s)}}_{s = 1}^{r - 1}, \Big(\zeta^{(s)}\Big)_{s = 1}^{r-1}, \Big(\xi^{(s)}\Big)_{s = 1}^{r-1}, \zeta^{(r)}}
\]
and
\[
 	\muh^{(r)} = \algof{\dataset_\ell^{(r)}; \xi^{(r)}}, \quad \Yhat^{(r)} = \muh^{(r)}\Big[\dataset_u^{(r)}\Big].
\]
Similarly, we have
\[
	\widehat{\test}_{\eps,\delta}(\alg', \dataset_\ell, \dataset_u) = g\sbr{\dataset_\ell, \dataset_u, \Big(\widetilde{\dataset}_\ell^{(r)}\Big)_{r \geq 1}, \Big(\widetilde{\dataset}_u^{(r)}\Big)_{r \geq 1}, \rbr{\Ytilde^{(r)}}_{r \geq 1}, \Big(\zeta^{(r)}\Big)_{r \geq 1}, \Big(\widetilde\xi^{(r)}\Big)_{r \geq 1}, \zeta}
\]
for the same function $g$ and the same randomization terms $\zeta^{(1)}, \zeta^{(2)}, \dots, \zeta \iidsim \Uniform [0,1]$, where for each $r \geq 1$,
\[
	\rbr{\widetilde{\dataset}_\ell^{(r)}, \widetilde{\dataset}_u^{(r)}, \widetilde\xi^{(r)}} = f^{(r)}\sbr{\dataset_\ell, \dataset_u, \Big(\widetilde{\dataset}_\ell^{(s)}\Big)_{s = 1}^{r-1}, \Big(\widetilde{\dataset}_u^{(s)}\Big)_{s = 1}^{r - 1}, \rbr{\Ytilde^{(s)}}_{s = 1}^{r - 1}, \Big(\zeta^{(s)}\Big)_{s = 1}^{r-1}, \Big(\widetilde\xi^{(s)}\Big)_{s = 1}^{r-1}, \zeta^{(r)}},
\]
and
\[
 	\mut^{(r)} = \algof{\widetilde{\dataset}_\ell^{(r)}; \widetilde\xi^{(r)}}, \quad \Ytilde^{(r)} = \mut^{(r)}\Big[\widetilde{\dataset}_u^{(r)}\Big].
\]
We will now verify that, almost surely, for all $r \geq 1$ it holds that
\begin{equation} \label{eqn:induction_m}
	\rbr{\dataset_\ell^{(r)}, \dataset_u^{(r)}, \xi^{(r)}} = \rbr{\widetilde{\dataset}_\ell^{(r)}, \widetilde{\dataset}_u^{(r)}, \widetilde\xi^{(r)}}, \quad \muh^{(r)} = \mut^{(r)}, \quad \Yhat^{(r)} = \Ytilde^{(r)}.
\end{equation}
First, consider $r = 1$. Then, we have
\[
	\rbr{\dataset_\ell^{(1)}, \dataset_u^{(1)}, \xi^{(1)}} = f^{(1)}\sbr{\dataset_\ell, \dataset_u, \zeta^{(1)}} = \rbr{\widetilde{\dataset}_\ell^{(1)}, \widetilde{\dataset}_u^{(1)}, \widetilde\xi^{(1)}},
\]
and therefore
\[
	\muh^{(1)} = \algof{\dataset_\ell^{(1)}; \xi^{(1)}} = \alg'\sbr{\dataset_\ell^{(1)}; \xi^{(1)}} = \alg'\sbr{\widetilde{\dataset}_\ell^{(1)}; \widetilde\xi^{(1)}} = \mut^{(1)},
\]
where, by definition of $\alg'$, the second equality holds on the event that $Y_i^{(1)} \neq y_*$ for all $i = 1, \dots, N_\ell^{(1)}$, and this event holds almost surely by definition of $y_*$. So, we have
\[
	\Yhat^{(1)} = \muh^{(1)}\Big[\dataset_u^{(1)}\Big] = \mut^{(1)}\Big[\widetilde{\dataset}_u^{(1)}\Big] = \Ytilde^{(1)},
\] 
almost surely, which verifies~\eqref{eqn:induction_m} for the case $r = 1$. Next, consider any $r' \geq 2$. Suppose that~\eqref{eqn:induction_m} holds for all $r = 1, \dots, r'-1$. Then,
\begin{multline*}
	\rbr{\dataset_\ell^{(r')},\dataset_u^{(r')}, \xi^{(r')}}\\
	= f^{(r')}\sbr{\dataset_\ell, \dataset_u, \Big(\dataset_\ell^{(s)}\Big)_{s = 1}^{r'-1}, \Big(\dataset_u^{(s)}\Big)_{s = 1}^{r' - 1}, \rbr{\Yhat^{(s)}}_{s = 1}^{r' - 1}, \Big(\zeta^{(s)}\Big)_{s = 1}^{r'-1}, \Big(\xi^{(s)}\Big)_{s = 1}^{r'-1}, \zeta^{(r')}}\\
	= f^{(r')}\sbr{\dataset_\ell, \dataset_u, \Big(\widetilde{\dataset}_\ell^{(s)}\Big)_{s = 1}^{r'-1}, \Big(\widetilde{\dataset}_u^{(s)}\Big)_{s = 1}^{r' - 1}, \rbr{\Ytilde^{(s)}}_{s = 1}^{r' - 1}, \Big(\zeta^{(s)}\Big)_{s = 1}^{r'-1}, \Big(\widetilde\xi^{(s)}\Big)_{s = 1}^{r'-1}, \zeta^{(r')}}\\
	= \rbr{\widetilde{\dataset}_\ell^{(r')}, \widetilde{\dataset}_u^{(r')}, \widetilde\xi^{(r')}},
\end{multline*}
where the second equality holds almost surely by induction. Therefore we have
\[
	\muh^{(r')} = \algof{\dataset_\ell^{(r')}; \xi^{(r')}} = \alg'\sbr{\dataset_\ell^{(r')}; \xi^{(r')}} = \alg'\sbr{\widetilde{\dataset}_\ell^{(r')}; \widetilde\xi^{(r')}} = \mut^{(r')},
\]
where, by definition of $\alg'$, the second equality holds on the event that $Y_i^{(r')} \neq y_*$ for all $i = 1, \dots, N_\ell^{(r')}$, and this event holds almost surely by definition of $y_*$. So, we have
\[
	\Yhat^{(r')} = \muh^{(r')}\Big[\dataset_u^{(r')}\Big] = \mut^{(r')}\Big[\widetilde{\dataset}_u^{(r')}\Big] = \Ytilde^{(r')},
\] 
almost surely. Therefore,~\eqref{eqn:induction_m} holds almost surely for $r = r'$. By induction, we see that~\eqref{eqn:induction_m} holds almost surely for all $r \geq 1$, and so 
\begin{multline*}
	\widehat{\test}_{\eps,\delta}(\alg, \dataset_\ell, \dataset_u)\\
	= g\sbr{\dataset_\ell, \dataset_u, \Big(\dataset_\ell^{(r)}\Big)_{r \geq 1}, \Big(\dataset_u^{(r)}\Big)_{r \geq 1}, \rbr{\Yhat^{(r)}}_{r \geq 1}, \Big(\zeta^{(r)}\Big)_{r \geq 1}, \Big({\xi^{(r)}}\Big)_{r \geq 1}, \zeta}\\
	= g\sbr{\dataset_\ell, \dataset_u, \Big(\widetilde{\dataset}_\ell^{(r)}\Big)_{r \geq 1}, \Big(\widetilde{\dataset}_u^{(r)}\Big)_{r \geq 1}, \rbr{\Ytilde^{(r)}}_{r \geq 1}, \Big(\zeta^{(r)}\Big)_{r \geq 1}, \Big(\widetilde\xi^{(r)}\Big)_{r \geq 1}, \zeta}\\
	= \widehat{\test}_{\eps,\delta}(\alg', \dataset_\ell, \dataset_u) 
\end{multline*}
holds almost surely. This proves the claim~\eqref{eqn:same_T_hat}, and thus completes our proof of~\eqref{eqn:power_optimal_labeled_to_show}.

\subsection{Proof of~\eqref{eqn:power_optimal_unlabeled_to_show}}

Define
\[
	\Xcal = \cbr{X_1, \dots, X_{N_\ell+N_u}, X^{(1)}_1, \dots,X^{(1)}_{N^{(1)}_\ell+N^{(1)}_u}, X^{(2)}_1, \dots} \subset \R^d,
\]
the set of all $X$ values that are observed at any point when running the test. (Here, the $X_i$'s and $X^{(k)}_i$'s are defined as in Section~\ref{app:proof_thm:optimality_labeled}). Then, there exists some $x_* \in \R$ such that 
\[
	\PP{x_* \in \Xcal} = 0.
\]
The proof of this claim is analogous to the proof of~\eqref{eqn:y_star} in Section~\ref{app:proof_thm:optimality_labeled}, so we do not repeat the details here. Define $\dgp' = (1-c) \cdot \dgp + c \cdot \dgp_1$ for
\[
	\dgp_1 = \delta_{x_*}\times P_Y,
\]
where $\delta_{x_*}$ is the point mass at $x_*$. In other words, $\dgp_1$ is defined by drawing $Y$ from $\dgp_Y$ (the marginal distribution of $Y$ under $\dgp$), and then setting $X = x_*$ deterministically. 

Next, we define the algorithm $\alg'$. For any labeled dataset $\dataset = \big((x_1,y_1), \dots, (x_m,y_m)\big)$ and any test point $x \in \R^d$, define
\[
	\alg'\sbr{\dataset; \xi}(x) =
	\begin{cases}
	\alg_1\sbr{\dataset; \xi}(x) &\textnormal{ if $|\dataset| = n$, and $\sum_{i = 1}^n \One{x_i = x_*} \geq 1$ or $x = x_*$,}\\
	\algof{\dataset; \xi}(x) & \textnormal{ otherwise},
	\end{cases}
\]
where $\alg_1$ is defined as in~\eqref{eqn:define_Alg1_labeled} from before.

Now, we need to verify~\eqref{eqn:not_stable}, i.e., $(\alg',\dgp',n)$ is not $(\eps,\delta)$-stable. This proof is similar to the analogous proof in Section~\ref{app:proof_thm:optimality_labeled}. Let $A_1, \dots, A_n \iidsim \operatorname{Bernoulli}(c)$ be drawn independently from $(X_i,Y_i)\iidsim \dgp$ and $\xi \iidsim \Uniform[0,1]$. Then, defining
\[
	(X'_i,Y'_i) = \begin{cases}(X_i,Y_i) & \text{ if } A_i = 0,\\ (x_*,Y_i) & \text{ if } A_i = 1,\end{cases}
\]
for $i = 1, \dots, n$ yields $n$ i.i.d.~draws from $\dgp'$. First, we can observe that if $\sum_{i = 1}^{n+1} \tOne{X_i' = x_*} \geq 1$ then $|\muh'_n(X_{n+1}') - \muh'_{n-1}(X_{n+1}')| \geq 1+\eps > \eps$, i.e.,
\[
	\PP{\abr{\muh'_n(X_{n+1}') - \muh'_{n-1}(X_{n+1}')} > \eps\ \left\vert\ \sum_{i = 1}^{n+1} \One{X_i' = x_*} \geq 1 \right.} = 1.
\]
We can also calculate
\[
	\PP{\sum_{i = 1}^{n+1} \One{X_i' = x_*} \geq 1} = \PP{\sum_{i = 1}^{n+1} \One{A_i = 1} \geq 1 } = 1-(1-c)^{n+1}.
\]
Next, exactly as in Section~\ref{app:proof_thm:optimality_labeled}, we have
\begin{multline*}
	\PP{\abr{\muh'_n(X_{n+1}') - \muh'_{n-1}(X_{n+1}')} > \eps,\ \sum_{i = 1}^{n+1} \One{X_i' = x_*} = 0}\\
	\begin{aligned}[t]
	&= \PP{\abr{\muh_n(X_{n+1}) - \muh_{n-1}(X_{n+1})} > \eps,\ \sum_{i = 1}^{n+1} \One{A_i = 1} = 0}\\
	&= \PP{\abr{\muh_n(X_{n+1}) - \muh_{n-1}(X_{n+1})} > \eps} \PP{\sum_{i = 1}^{n+1} \One{A_i=1} = 0}
	= \delta^*_\eps (1-c)^{n+1}.
	\end{aligned}
\end{multline*}
Combining these calculations, we obtain
\begin{align*}
&	\PP{\abr{\muh'_n(X_{n+1}') - \muh'_{n-1}(X_{n+1}')} > \eps}\\
	&= 
	\PP{\abr{\muh'_n(X_{n+1}') - \muh'_{n-1}(X_{n+1}')} > \eps,\ \sum_{i = 1}^{n+1} \One{X_i' = x_*} \geq 1}\\
	&\hspace{1in}+ \PP{\abr{\muh'_n(X_{n+1}') - \muh'_{n-1}(X_{n+1}')} > \eps,\ \sum_{i = 1}^{n+1} \One{X_i' = x_*} = 0}\\
	&= 1 - (1-c)^{n+1} + \delta^*_\eps (1-c)^{n+1}
	= 1-\rbr{1-\delta^*_\eps} (1-c)^{n+1}
	> \delta,
\end{align*}
where the last step holds by our assumption on $c$. Therefore, $(\alg',\dgp',n)$ is not $(\eps,\delta)$-stable, which verifies~\eqref{eqn:not_stable} for this setting.

It remains to verify~\eqref{eqn:same_T_hat}. The proof of~\eqref{eqn:same_T_hat} for this setting is essentially identical to the analogous statement in Section~\ref{app:proof_thm:optimality_labeled}, so we omit it here.

\section{Proof of Proposition~\ref{prop:T_delta_eps}} \label{app:proof_prop:T_delta_eps}

Assume $T^*_{\eps,\delta} = 0$, i.e., $\delta < \delta^*_\eps$ and $\eps < \eps^*_\delta$. To prove part (a), we have
\begin{align*}
	\PP{\widehat{\test}_{\eps,\delta} = 1}
	&= \PP{\widehat{\delta}_\eps \leq \delta} \textnormal{\quad by definition of the test $\widehat{\test}_{\eps,\delta}$}\\
	&\leq \PP{\widehat{\delta}_\eps < \delta^*_\eps} \textnormal{\quad since $\delta < \delta^*_\eps$}\\
	&\leq \alpha \textnormal{\quad by assumption-free validity of $\widehat{\delta}_\eps$~\eqref{eqn:valid_hat_delta}.}
\end{align*}
Similarly, to prove (b), we have
\begin{align*}
	\PP{\widehat{\test}_{\eps,\delta} = 1}
	&= \PP{\widehat{\eps}_\delta \leq \eps} \textnormal{\quad by definition of the test $\widehat{\test}_{\eps,\delta}$}\\
	&\leq \PP{\widehat{\eps}_\delta < \eps^*_\delta} \textnormal{\quad since $\eps < \eps^*_\delta$}\\
	&\leq \alpha \textnormal{\quad by assumption-free validity of $\widehat{\eps}_\delta$~\eqref{eqn:valid_hat_eps}.}
\end{align*}
Next, we prove part (c). We have
\begin{align*}
	\PP{\widehat{\delta}_\eps < \delta^*_\eps} 
	&=\sup_{\delta < \delta^*_\eps} \PP{\widehat{\delta}_\eps < \delta}\\
	&\leq \sup_{\delta < \delta^*_\eps} \PP{\widehat{\test}_{\eps,\delta} = 1} \textnormal{\quad since $\widehat{\delta}_\eps < \delta$ implies $\widehat{\test}_{\eps,\delta} = 1$ by definition of $\widehat{\delta}_\eps$}\\
	 &\leq \sup_{\delta < \delta^*_\eps}\alpha\textnormal{\quad by assumption-free validity of $\widehat{\test}_{\eps,\delta}$~\eqref{eqn:valid_hat_T}}\\
	&= \alpha.
\end{align*}
Similarly, we verify part (d). We have
\begin{align*}
	\PP{\widehat{\eps}_\delta < \eps^*_\delta}
	&=\sup_{\eps < \eps^*_\delta} \PP{\widehat{\eps}_\delta < \eps}\\
	&\leq \sup_{\eps < \eps^*_\delta} \PP{\widehat{\test}_{\eps,\delta} = 1} \textnormal{\quad since $\widehat{\eps}_\delta < \eps$ implies $\widehat{\test}_{\eps,\delta} = 1$ by definition of $\widehat{\eps}_\delta$}\\
	 &\leq \sup_{\eps < \eps^*_\delta}\alpha\textnormal{\quad by assumption-free validity of $\widehat{\test}_{\eps,\delta}$~\eqref{eqn:valid_hat_T}}\\
	&= \alpha.
\end{align*}

\section{Proofs of Theorems~\ref{thm:optimality_transparent} and~\ref{thm:optimality_constrained}} \label{app:proofs_transparent}

First, we observe that Theorem~\ref{thm:optimality_transparent} is simply a special case of Theorem~\ref{thm:optimality_constrained} obtained by taking $\modelclass = \modelspace$. In particular, the condition~\eqref{eqn:modelclass_assume} is satisfied because the function $(\mu_1, \dots, \mu_m) \mapsto \mu$, where $\mu(x) = 1 + t + \max_{i=1,\dots,m} \mu_i(x)$, is an example of $M$.

Now, we prove Theorem~\ref{thm:optimality_constrained}. At a high level, the proof is quite similar to the proof of Theorem~\ref{thm:optimality}. However, because $\widehat{\test}_{\eps,\delta}$ is now only guaranteed to satisfy the weaker validity \eqref{eqn:valid_hat_T_modelclass} (i.e., validity only with respect to algorithms in $\algspace_{\modelclass}$), our construction $\alg'$ must also be a member of $\algspace_{\modelclass}$. Specifically, following the notation defined in the proof of Theorem~\ref{thm:optimality}, we need to verify:
 \begin{multline} \label{eqn:power_optimal_labeled_to_show_transparent}
 	\textnormal{For any $c > 1 - \rbr{\frac{1-\delta}{1-\delta^*_\eps}}^{1/n}$, we can choose $\dgp_1$ and $\alg'\in\algspace_{\modelclass}$ such that}\\
	\textnormal{$(\dgp_1)_X = \dgp_X$ and both~\eqref{eqn:not_stable} and~\eqref{eqn:same_T_hat} hold.}
\end{multline}
This is the same as the analogous claim~\eqref{eqn:power_optimal_labeled_to_show} appearing in the proof of Theorem~\ref{thm:optimality},
except that $\alg'$ is now constrained to lie in $\algspace_{\modelclass}$.

As in the proof of~\eqref{eqn:power_optimal_labeled_to_show} (in Section~\ref{app:proof_thm:optimality_labeled}), we define $\dgp' = (1-c) \cdot \dgp + c \cdot \dgp_1$ for
\[
	\dgp_1 = \dgp_X \times \delta_{y_*},
\]
where $\delta_{y_*}$ is the point mass at $y_*$, for some $y_*$ chosen to satisfy~\eqref{eqn:y_star} as before. Clearly, $\dgp_1$ and $\dgp$ have the same marginal distribution of $X$ by definition.

We now modify our construction of $\alg'$ so that $\alg' \in \algspace_{\modelclass}$. Note that since $c > 1 - \{(1-\delta) / (1-\delta^*_\eps)\}^{1/n}$, we have that $c > 1 - \{(1-\delta-a) / (1-\delta^*_\eps-a)\}^{1/n}$ for sufficiently small $a > 0$. Fix any such $a\in(0,1-\delta^*_\eps)$. Choose some finite $L$ such that
\[
	\Pp{P_X}{\|X\|_2 > L} \leq a.
\]
The fitted model $\alg'[\dataset; \xi]$ is then defined as follows:
\[
	\alg'\sbr{\dataset; \xi}(x) =
	\begin{cases}
	\alg_1\sbr{\dataset; \xi}(x) & \textnormal{ if $|\dataset| = n$ and $\sum_{i = 1}^n \One{y_i = y_*} \geq 1$,}\\
	\algof{\dataset; \xi}(x) & \textnormal{ otherwise},
	\end{cases}
\]
where, for a dataset $\dataset$ of size $|\dataset| = m$,
\[	\alg_1\sbr{\dataset; \xi}(x) = M\rbr{\alg\sbr{(\dataset_\sigma)_{-m}; \xi}: \sigma \in \mathcal{S}_m}
\]
for the function $M$ defined as in~\eqref{eqn:modelclass_assume}. By construction, $\alg'$ is symmetric, and satisfies $\alg' \in \algspace_{\modelclass}$.

We now verify that this new construction of $(\alg', \dgp', n)$ is not $(\eps,\delta)$-stable. Let $A_1, \dots, A_n \iidsim \operatorname{Bernoulli}(c)$ be drawn independently from $(X_i,Y_i) \iidsim P$ and from $\xi \sim \Uniform[0,1]$. Then, defining
\[
	(X'_i,Y'_i) = \begin{cases}(X_i,Y_i) & \text{ if } A_i = 0,\\ (X_i,y_*) & \text{ if } A_i = 1,\end{cases}
\]
for $i = 1, \dots, n$ yields $n$ i.i.d.~draws from $\dgp'$. Defining
\[
	\muh'_n = \alg'\sbr{(X_1',Y_1'), \dots, (X_n',Y_n'); \xi}, \quad \muh'_{n-1} = \alg'\sbr{(X_1',Y_1'), \dots, (X_{n-1}',Y_{n-1}'); \xi},
\]
we see that by definition of $\alg'$, if $\sum_{i = 1}^n \tOne{Y_i' = y_*} \geq 1$ then $|\muh'_n(X_{n+1}') - \muh'_{n-1}(X_{n+1}')| > \eps$ on the event $\|X_{n+1}\|_2 \leq L$. Since the event $\|X_{n+1}\|_2 \leq L$ holds with probability at least $1-a$, and is independent from the $A_i$'s, while $\sum_{i=1}^n \One{Y_i'=y_*}\geq 1$ holds if and only if $\sum_{i=1}^n \One{A_i=1}\geq 1$, this implies
\[
	\PP{\abr{\muh'_n(X_{n+1}') - \muh'_{n-1}(X_{n+1}')} > \eps\ \left\vert\ \sum_{i = 1}^n \One{Y_i' = y_*} \geq 1 \right.} \geq 1-a,
\]
while
\[
	\PP{\sum_{i = 1}^n \One{Y_i'=y_*} \geq 1}
	= \PP{\sum_{i = 1}^n \One{A_i = 1} \geq 1}
	= 1-(1-c)^n
\]
holds as in the proof of Theorem~\ref{thm:optimality}. We can also calculate
\[
	\PP{\abr{\muh'_n(X_{n+1}') - \muh'_{n-1}(X_{n+1}')} > \eps,\ \sum_{i = 1}^n \One{Y_i' = y_*} = 0} 
	=\delta^*_\eps (1-c)^n
\]
exactly as in the proof of Theorem~\ref{thm:optimality}. Combining these calculations, we have
\begin{multline*}
	\PP{\abr{\muh'_n(X_{n+1}') - \muh'_{n-1}(X_{n+1}')} > \eps}\\
	\begin{aligned}[t]
	&= \begin{multlined}[t]\PP{\abr{\muh'_n(X_{n+1}') - \muh'_{n-1}(X_{n+1}')} > \eps,\ \sum_{i = 1}^n \One{Y_i' = y_*} \geq 1}\\
	+ \PP{\abr{\muh'_n(X_{n+1}') - \muh'_{n-1}(X_{n+1}')} > \eps,\ \sum_{i = 1}^n \One{Y_i' = y_*} = 0}
	\end{multlined}\\
	&\geq (1-a) \cbr{1 - (1-c)^n} + \delta^*_\eps (1-c)^n
	= (1-a) - \cbr{(1-a)-\delta^*_\eps} (1-c)^n
	> \delta,
	\end{aligned}
\end{multline*}
where the last step holds by our assumptions on $c$ and $a$. Therefore, $(\alg',\dgp',n)$ is not $(\eps,\delta)$-stable, i.e., we have verified that~\eqref{eqn:not_stable} holds. Finally, the proof of~\eqref{eqn:same_T_hat} is identical to the proof of the same claim in Theorem~\ref{thm:optimality}, and so we omit it here. This verifies~\eqref{eqn:power_optimal_labeled_to_show_transparent}, and thus completes the proof of the theorem.

\section{Algorithmic randomness and algorithmic stability} \label{app:general_stability}

In this section, we discuss other definitions of stability in the context of randomized algorithms---that is, algorithms where the fitted model $\muh_n = \algof{(X_1,Y_1), \dots, (X_n,Y_n); \xi}$ depends on the randomization term $\xi$ as well as the data itself. This can include methods that incorporate steps such as stochastic gradient descent, random initialization, bootstrapping, subsampling the data and/or the features, or randomized approximations. As noted by \citet{Elisseeff2005Stability}, the randomization term $\xi$ may have a large impact on the output. For example, for an algorithm $\alg$ that fits a model via a bootstrapping type procedure, if the dataset has a few extreme outliers, then even on the {\em same} dataset, different draws of $\xi$ might lead to very different fitted models $\muh_n$, depending on whether the outliers were included or excluded in the bootstrapped samples.

The alternative definitions we will consider here will differ in how they handle algorithmic randomness, which impacts the contexts in which each of these definitions is relevant. We recall that Definition~\ref{def:stability} compared the models 
\[
	\muh_n = \algof{(X_1,Y_1), \dots, (X_n,Y_n); \xi},\quad
	\muh_{n-1} = \algof{(X_1,Y_1), \dots, (X_{n-1},Y_{n-1}); \xi}
\]
which are fitted using the same value $\xi$ drawn from $\Uniform[0,1]$. For instance, this definition is of interest when the randomness in $\alg$ originates from the choice of the initial point of some optimization algorithm, and we want to measure the effect of removing a training data point while fixing the initialization. In contrast, in some settings we may want to compare {\em independent} runs of $\alg$ corresponding to independent draws of the randomization terms, e.g., because this randomization mechanism is unknown or inaccessible. In other words, we may want to ask whether the fitted models 
\[
	\muh_n = \algof{(X_1,Y_1), \dots, (X_n,Y_n); \xi},\quad
	\muh_{n-1} = \algof{(X_1,Y_1), \dots, (X_{n-1},Y_{n-1}); \xi'}
\]
lead to similar predictions on a new test point $X_{n+1}$, where the randomization noise terms $\xi, \xi'$ are now independent draws from the $\Uniform[0,1]$ distribution. In other settings, it may instead be more relevant to ask whether $\muh_n$ and $\muh_{n-1}$ can be ensured to lead to similar predictions using {\em some} coupling of the noise terms $\xi$ and $\xi'$. An example of a setting in which this definition is the most pertinent occurs when the randomization mechanism depends on $n$, e.g., when $\alg$ involves bootstrapping or subsampling of the given training dataset. For instance, to analyze how bagging affects stability, \citet{Elisseeff2005Stability} linked the randomization terms for $\muh_n$ and $\muh_{n-1}$ in a very specific way, defining $\muh_{n-1}$ as the model that ``reuses" the same bags as $\muh_n$ by erasing the training data point from each bag containing it before applying the base learning method.

Therefore, we are comparing three different notions of stability: first, our existing definition (coinciding with Definition~\ref{def:stability}),
\begin{multline} \label{eq:stability_original}
	\PP{\abr{\muh_n(X_{n+1}) - \muh_{n-1}(X_{n+1})} > \eps} \leq \delta \textnormal{ where } \muh_n = \algof{(X_1,Y_1), \dots, (X_n,Y_n); \xi}\\
	\textnormal{ and } \muh_{n-1} = \algof{(X_1,Y_1), \dots, (X_{n-1},Y_{n-1}); \xi} \textnormal{ for $\xi \sim \Uniform[0,1]$};
\end{multline}
second, the definition with independent $\xi,\xi'$,
\begin{multline} \label{eq:stability_independent}
	\PP{\abr{\muh_n(X_{n+1}) - \muh_{n-1}(X_{n+1})} > \eps} \leq \delta \textnormal{ where } \muh_n = \algof{(X_1,Y_1), \dots, (X_n,Y_n); \xi}\\
	\textnormal{ and } \muh_{n-1} = \algof{(X_1,Y_1), \dots, (X_{n-1},Y_{n-1}); \xi'} \textnormal{ for $\xi, \xi' \iidsim \Uniform[0,1]$};
\end{multline}
and finally, the definition over an arbitrary coupling,
\begin{multline} \label{eq:stability_coupling}
	\PP{\abr{\muh_n(X_{n+1}) - \muh_{n-1}(X_{n+1})} > \eps} \leq \delta \textnormal{ where } \muh_n = \algof{(X_1,Y_1), \dots, (X_n,Y_n); \xi}\\
	\textnormal{ and } \muh_{n-1} = \algof{(X_1,Y_1), \dots, (X_{n-1},Y_{n-1}); \xi'} \textnormal{ for $(\xi,\xi') \sim Q$,}\\
	\textnormal{ for some distribution $Q$ on $[0,1]^2$ with marginals that are $\Uniform[0,1]$}.
\end{multline}
To be precise, for this last definition, we are asking whether there exists {\em any} coupling $Q$ for which the probability bound holds.

As before, we let $ \test^*_{\eps,\delta}(\alg,\dgp,n) = 1$ denote that the triple $(\alg,\dgp,n)$ satisfies the original definition of stability given in~\eqref{eq:stability_original} (equivalent to Definition~\ref{def:stability}), while $\test^{\indep}_{\eps,\delta}(\alg,\dgp,n) = 1$ (respectively, $\test^{\textnormal{cpl}}_{\eps,\delta}(\alg,\dgp,n) = 1$) will denote that $(\alg,\dgp,n)$ satisfies the independence-based definition~\eqref{eq:stability_independent} (respectively, the coupling-based definition~\eqref{eq:stability_coupling}).

In particular, we can see that this last definition is a strict relaxation of the first two: for any $(\alg,\dgp,n)$, if the original stability definition~\eqref{eq:stability_original} holds then the coupling definition~\eqref{eq:stability_coupling} also holds (by taking $Q$ to be the distribution that draws $\xi \sim \Uniform[0,1]$ and sets $\xi'=\xi$), and if the independence-based stability definition~\eqref{eq:stability_independent} holds then again the coupling definition~\eqref{eq:stability_coupling} also holds (by taking $Q$ to be the product distribution, $\Uniform[0,1] \times \Uniform[0,1]$). In other words, we have shown that
\[
	\test^*_{\eps,\delta}(\alg,\dgp,n) = 1 \quad \textnormal{or} \quad \test^{\indep}_{\eps,\delta}(\alg,\dgp,n) = 1 \quad \implies \quad \test^{\textnormal{cpl}}_{\eps,\delta}(\alg,\dgp,n) = 1.
\]

There is also a partial relation between the original definition and the independence-based definition---if $(\alg,\dgp,n)$ satisfies the independence-based condition~\eqref{eq:stability_independent}, then the original definition~\eqref{eq:stability_original} holds with $(3\eps,3\delta)$ in place of $(\eps,\delta)$, because we have
\begin{align*}
	&\PP{\abr{\muh_n(\xi)(X_{n+1}) - \muh_{n-1}(\xi)(X_{n+1})}>3\eps}\\
	&\hspace{.5in} \leq \PP{\abr{\muh_n(\xi)(X_{n+1}) - \muh_{n-1}(\xi')(X_{n+1})} > \eps}\\
	&\hspace{1in} + \PP{\abr{\muh_n(\xi'')(X_{n+1}) - \muh_{n-1}(\xi')(X_{n+1})} > \eps}\\
	&\hspace{1.5in} + \PP{\abr{\muh_n(\xi'')(X_{n+1}) - \muh_{n-1}(\xi)(X_{n+1})} > \eps} \leq 3\delta,
\end{align*}
where $\xi, \xi', \xi'' \iidsim \Uniform[0,1]$, and where, e.g., $\muh_n(\xi)$ denotes $ \algof{(X_1,Y_1), \dots, (X_n,Y_n); \xi}$. This proves that
\[
	\test^{\indep}_{\eps,\delta}(\alg,\dgp,n) = 1 \quad \implies \quad \test^*_{3\eps,3\delta}(\alg,\dgp,n) = 1.
\]

Now, for a black-box testing procedure $\widehat{\test}_{\eps,\delta}$, we may want to ensure that the test has assumption-free validity as a test of any one of our different notions of stability---that is, while the original notion of validity requires
\begin{equation} \label{eqn:valid_hat_T_original}
	\textnormal{For all $(\alg,\dgp,n)$, if $\test^*_{\eps,\delta}(\alg,\dgp,n) = 0$ then $\PP{\widehat{\test}_{\eps,\delta} = 1} \leq \alpha$},
\end{equation}
we might instead wish to test stability with respect to independent $\xi,\xi'$ as in the stability definition~\eqref{eq:stability_independent},
\begin{equation} \label{eqn:valid_hat_T_independent}
	\textnormal{For all $(\alg,\dgp,n)$, if $\test^{\indep}_{\eps,\delta}(\alg,\dgp,n) = 0$ then $\PP{\widehat{\test}_{\eps,\delta} = 1} \leq \alpha$},
\end{equation}
or with respect to the coupling-based definition~\eqref{eq:stability_coupling}, 
\begin{equation} \label{eqn:valid_hat_T_coupling}
	\textnormal{For all $(\alg,\dgp,n)$, if $\test^{\textnormal{cpl}}_{\eps,\delta}(\alg,\dgp,n) = 0$ then $\PP{\widehat{\test}_{\eps,\delta} = 1} \leq \alpha$.}
\end{equation}
Our original result (Theorem~\ref{thm:optimality}) provided a bound on the power of any black-box test $\widehat{\test}_{\eps,\delta}$ that satisfied validity in its original form~\eqref{eqn:valid_hat_T_original}. Here, we give an analogous result for black-box tests satisfying the alternative assumption-free validity conditions~\eqref{eqn:valid_hat_T_independent} or~\eqref{eqn:valid_hat_T_coupling}.

\begin{theorem} \label{thm:optimality_general}
Fix any parameters $\eps \geq 0$ and $\delta \in [0,1)$, any desired error level $\alpha \in (0,1)$, and any integers $n \geq 2$ and $N_\ell, N_u \geq 0$. Let $\widehat{\test}_{\eps,\delta}$ be any black-box test as in Definition~\ref{def:black_box_test}. Let $\kappa=\kappa(n,N_\ell,N_u)$ be defined as in~\eqref{eqn:define_kappa}.

If $\widehat{\test}_{\eps,\delta}$ satisfies the assumption-free validity condition~\eqref{eqn:valid_hat_T_independent} at level $\alpha$, then for any $(\alg,\dgp,n)$ that satisfies the condition~\eqref{eq:stability_independent} (i.e., stability with respect to independent $\xi, \xi'$), the power of $\widehat{\test}_{\eps,\delta}$ is bounded as
\begin{equation} \label{eqn:power_optimal_independent}
	\PP{\widehat{\test}_{\eps,\delta} = 1} \leq \cbr{\alpha \cdot \rbr{\frac{1-\delta^{\indep}_\eps}{1-\delta}}^{\kappa}} \wedge 1.
\end{equation}

If instead $\widehat{\test}_{\eps,\delta}$ satisfies the assumption-free validity condition~\eqref{eqn:valid_hat_T_coupling} at level $\alpha$, then for any $(\alg,\dgp,n)$ that satisfies the condition~\eqref{eq:stability_coupling} (i.e., stability with respect to any coupled $\xi, \xi'$), the power of $\widehat{\test}_{\eps,\delta}$ is bounded as
\begin{equation} \label{eqn:power_optimal_coupling}
	\PP{\widehat{\test}_{\eps,\delta} = 1} \leq \cbr{\alpha \cdot \rbr{\frac{1-\delta^{\textnormal{cpl}}_\eps}{1-\delta}}^{\kappa}} \wedge 1.
\end{equation}
\end{theorem}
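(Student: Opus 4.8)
The plan is to re-run the proof of Theorem~\ref{thm:optimality} (Appendix~\ref{app:proof_thm:optimality}) essentially unchanged, replacing only the single ingredient that is sensitive to the notion of stability. First I would set $\delta^{\indep}_\eps = \PP{\abr{\muh_n(X_{n+1}) - \muh_{n-1}(X_{n+1})} > \eps}$ evaluated with independent $\xi,\xi'$, and $\delta^{\textnormal{cpl}}_\eps = \inf_Q \PP{\abr{\muh_n(X_{n+1}) - \muh_{n-1}(X_{n+1})} > \eps}$, the infimum over couplings $Q$ of $(\xi,\xi')$ with $\Uniform[0,1]$ marginals; these are the smallest admissible values of $\delta$ in \eqref{eq:stability_independent} and \eqref{eq:stability_coupling}. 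As before it suffices to establish the labeled bound (exponent $N_\ell/n$) and the unlabeled bound (exponent $(N_\ell+N_u)/(n+1)$) separately and take their minimum, $\kappa$. For each, and for every $c$ above the relevant threshold, I would build $\dgp' = (1-c)\dgp + c\dgp_1$ and an algorithm $\alg'$ satisfying the indistinguishability property \eqref{eqn:same_T_hat} and failing the targeted stability notion \eqref{eqn:not_stable}. The choice of $y_*$ (or $x_*$), the data-coupling inequalities \eqref{eqn:same_data_labeled}--\eqref{eqn:same_data_unlabeled}, and the proof of \eqref{eqn:same_T_hat} all transfer verbatim, since that argument only uses that $\alg'$ agrees with $\alg$ on every dataset avoiding $y_*$ (or $x_*$). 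Thus the whole task reduces to redoing \eqref{eqn:not_stable}.

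The crux --- and the step I expect to be hardest --- is that the corruption $\alg_1\sbr{\dataset;\xi} = 1+\eps+\max_\sigma\algof{(\dataset_\sigma)_{-m};\xi}$ from Theorem~\ref{thm:optimality} only forces $\abr{\muh'_n - \muh'_{n-1}} > \eps$ when both fitted models use the \emph{same} seed; once $\muh'_{n-1}$ is allowed an independent or adversarially coupled seed $\xi'$, the dominating term in the max (evaluated at $\xi$) no longer controls $\muh'_{n-1}$ (evaluated at $\xi'$). I would instead use a corruption that is \emph{deterministic in the seed}: when the trigger fires (a size-$n$ input containing $y_*$), output the model $x\mapsto V_\dataset(x)$, where $V_\dataset(x)$ is placed far out in the tail of the prediction law. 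Writing $\nu_\dataset^x$ for the law of $\algof{(\dataset_\sigma)_{-n};\xi'}(x)$ under $\sigma\sim\Uniform(\mathcal{S}_n)$ and $\xi'\sim\Uniform[0,1]$, the elementary fact I would invoke is that any probability measure on $\R$ assigns mass at most $\eta$ to some window $[v-\eps,v+\eps]$ with $\abr{v}$ arbitrarily large (else infinitely many disjoint windows would carry total mass exceeding one); choosing $v = V_\dataset(x)$ this way, measurably in $(\dataset,x)$, makes $\nu_\dataset^x([V_\dataset(x)-\eps,V_\dataset(x)+\eps])\leq\eta$. Because $V_\dataset$ is seed-free, the comparison model $\muh'_{n-1}$ then satisfies $\PP{\abr{\muh'_n(X_{n+1}) - \muh'_{n-1}(X_{n+1})}>\eps \mid \textnormal{trigger}} \geq 1 - n!\,\eta$ regardless of how $\xi'$ is coupled to $\xi$ (the symmetrizing mixture over $\sigma$ costs only the harmless factor $n!$).

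With this corruption the independence and coupling cases collapse to one computation. Conditioning on the Bernoulli indicators $A_1,\dots,A_n$ of Appendix~\ref{app:proof_thm:optimality_labeled}, the trigger fires exactly when $\sum_i A_i \geq 1$, of probability $1-(1-c)^n$ and independent of everything else; on this event the gap exceeds $\eps$ with probability at least $1-n!\,\eta$, while on $\sum_i A_i = 0$ the data is a clean draw from $\dgp$ with $\alg'=\alg$, so the conditional gap probability is exactly $\delta^{\indep}_\eps$ under the product coupling and at least $\delta^{\textnormal{cpl}}_\eps$ under any coupling by definition of the infimum. Hence, for the relevant coupling(s), the overall probability is at least $(1-n!\,\eta)\{1-(1-c)^n\} + \delta^{\bullet}_\eps(1-c)^n$, where $\bullet\in\{\indep,\textnormal{cpl}\}$; letting $\eta\downarrow 0$ and $c$ decrease to the threshold $1-\{(1-\delta)/(1-\delta^{\bullet}_\eps)\}^{1/n}$ this exceeds $\delta$, verifying \eqref{eqn:not_stable}. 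The unlabeled bound repeats this with $x_*$ replacing $y_*$, the trigger enlarged to include the test point as in Appendix~\ref{app:proof_thm:optimality}, and exponent $n+1$ in place of $n$. Feeding these into \eqref{eqn:compare_power_labeled}--\eqref{eqn:compare_power_unlabeled} and taking the joint $(\eta,c)$ limit produces the exponents $N_\ell/n$ and $(N_\ell+N_u)/(n+1)$, giving the two claimed power bounds with $\kappa = \kappa(n,N_\ell,N_u)$. Beyond the seed-free corruption idea, the only genuinely technical loose end is the measurable selection of $V_\dataset(x)$, which I would resolve by taking $v$ to be an explicit solution of the tail condition (e.g.\ the smallest $v\geq 0$ past a measurable threshold at which the window mass drops below $\eta$).
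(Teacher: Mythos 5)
Your proposal is correct, and its skeleton coincides with the paper's: the paper likewise reruns the proof of Theorem~\ref{thm:optimality} wholesale---same choice of $y_*$ or $x_*$, same mixture $\dgp'$, same indistinguishability argument \eqref{eqn:same_T_hat}---and modifies only the corrupting algorithm $\alg_1$ so that the instability claim \eqref{eqn:not_stable} survives independent or adversarially coupled seeds; you correctly isolated this as the crux. The difference lies in how the seed-free corruption is built. The paper replaces \eqref{eqn:define_Alg1_labeled} with
\[
	\alg_1\sbr{\dataset; \xi}(x) = 1 + \eps + \max_{\sigma \in \mathcal{S}_m} q_{1-a} \rbr{\algof{(\dataset_\sigma)_{-m}; \xi'}(x)},
\]
where $q_{1-a}$ denotes the $(1-a)$-quantile with respect to $\xi' \sim \Uniform[0,1]$: a value deterministic in the seed that \emph{dominates} the comparison prediction by more than $\eps$ with probability at least $1-a$ over $\xi'$, under any coupling. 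Your construction instead picks, by pigeonhole, a deterministic value $V_\dataset(x)$ whose $\eps$-window carries mass at most $\eta$ under the permutation-averaged law of the leave-one-out prediction, forcing the gap to exceed $\eps$ with probability at least $1 - n!\,\eta$ (you could sharpen $n!$ to $n$, since each leave-one-out subset carries weight $1/n$ in your mixture, but this is immaterial as $\eta \to 0$). The two devices are interchangeable: each supplies a one-parameter slack ($a$ or $\eta$) that is sent to zero after $c$ approaches its threshold, with identical bookkeeping $(1-a)\cbr{1-(1-c)^n} + \delta^{\bullet}_\eps (1-c)^n > \delta$; your treatment of the coupled case is also the right one (the trigger contribution is coupling-free because the corrupted output is seed-free and $\xi'$ is marginally uniform, while the clean event contributes at least the infimum $\delta^{\textnormal{cpl}}_\eps$ for \emph{every} coupling, which is what is needed to violate the existential coupling-based definition); and both devices require the same mild measurable-selection care, where your suggestion is most cleanly implemented by selecting $V_\dataset(x)$ from a fixed countable grid of window centers, making measurability of the selection automatic. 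The paper's quantile version is marginally cleaner---one-sided domination, no combinatorial factor---while yours is marginally more elementary in that it avoids quantiles altogether. Either one completes the proof.
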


Here, $\delta^{\indep}_{\eps}$ and $\delta^{\textnormal{cpl}}_\eps$ are analogous to the term $\delta^*_\eps$ appearing in Theorem~\ref{thm:optimality}, with $\delta^{\indep}_{\eps}$ (respectively, $\delta^{\textnormal{cpl}}_{\eps}$) defined as the minimum value of $\delta$ such that $(\alg,\dgp,n)$ satisfies the independence-based stability condition~\eqref{eq:stability_independent} (respectively, the coupling-based stability condition~\eqref{eq:stability_coupling}). 
\begin{proof}[Proof of Theorem~\ref{thm:optimality_general}]
The proof of this theorem mostly parallels the proof of Theorem~\ref{thm:optimality}, with one modification: in place of our earlier definition~\eqref{eqn:define_Alg1_labeled} of the algorithm $\alg_1$, we instead define
\[
	\alg_1\sbr{\dataset; \xi}(x) = 1 + \eps + \max_{\sigma \in \mathcal{S}_m} q_{1-a} \rbr{\algof{(\dataset_\sigma)_{-m}; \xi'}(x)},
\]
where for any dataset $\dataset$ and any $x$, $q_{1-a}\trbr{\talgof{\dataset; \xi'}(x)}$ denotes the $(1-a)$-quantile of the distribution of $\talgof{\dataset; \xi'}(x)$ with respect to $\xi' \sim \Uniform[0,1]$, and where the constant $a \in (0,1-\delta^{\indep}_\eps)$ is chosen to be sufficiently small so that \[c > 1 - \rbr{(\frac{1-\delta-a}{1-\delta^{\indep}_\eps-a}}^{1/n} \textnormal{ or } c > 1 - \rbr{\frac{1-\delta-a}{1-\delta^{\indep}_\eps-a}}^{1/(n+1)}\] (for proving the analogous statement to~\eqref{eqn:power_optimal_labeled_to_show} or to~\eqref{eqn:power_optimal_unlabeled_to_show} when $\xi \indep \xi'$), or $a \in (0,1-\delta^{\textnormal{cpl}}_\eps)$ is chosen to be sufficiently small so that \[c > 1 - \rbr{\frac{1-\delta-a}{1-\delta^{\textnormal{cpl}}_\eps-a}}^{1/n} \textnormal{ or } c > 1 - \rbr{\frac{1-\delta-a}{1-\delta^{\textnormal{cpl}}_\eps-a}}^{1/(n+1)}\] (for proving the analogous statement to~\eqref{eqn:power_optimal_labeled_to_show} or to~\eqref{eqn:power_optimal_unlabeled_to_show} when $\xi, \xi'$ are arbitrarily coupled). With this change in place, the remainder of the proof is a straightforward modification of the proof of Theorem~\ref{thm:optimality}.
\end{proof}

Next, we comment on the role of the Binomial test in this broader setting. For the independence-based case, the Binomial test defined in Section~\ref{sec:simple_test} can be extended in a straightforward way---namely, by generating independent randomization terms $\xi^{(k)},\xi'{}^{(k)} \iidsim \Uniform[0,1]$ for each dataset $k = 1,\dots,\lfloor\kappa\rfloor$. The results of Theorem~\ref{thm:simple_test} are then exactly the same, except with $\delta^{\indep}_\eps$ in place of $\delta^*_\eps$; the proof is identical, and so we omit it here. Comparing to the power bound~\eqref{eqn:power_optimal_independent} in Theorem~\ref{thm:optimality_general}, this verifies that, for an integer $\kappa$, in the regimes $\delta \leq 1-\alpha^{1/\lfloor\kappa\rfloor}$ or $\delta^{\indep}_\eps = 0$, the Binomial test is again optimal. On the other hand, for the coupling-based definition~\eqref{eq:stability_coupling}, making an analogous statement is less straightforward since, to run a Binomial test that achieves optimal power, we would need to know the optimal coupling distribution $Q$ from which to draw $(\xi^{(k)},\xi'{}^{(k)})$. Therefore, we do not consider this extension here.

Finally, we comment on how the power bounds given in Theorem~\ref{thm:optimality_general} compare to the earlier Theorem~\ref{thm:optimality}.
Recall that we have shown that for any triple $(\alg,\dgp,n)$,
\[
	\test^{\textnormal{cpl}}_{\eps,\delta}(\alg,\dgp,n) = 0 \quad \implies \quad \test^*_{\eps,\delta}(\alg,\dgp,n)  =0\quad \textnormal{and}\quad\test^{\indep}_{\eps,\delta}(\alg,\dgp,n) = 0.
\]
This also means that if a test $\widehat{\test}_{\eps,\delta}$ has assumption-free validity as a test of $\test^*_{\eps,\delta}$ or $\test^{\indep}_{\eps,\delta}$, then it automatically has assumption-free validity as a test of $\test^{\textnormal{cpl}}_{\eps,\delta}$. Thus, the result~\eqref{eqn:power_optimal_coupling} on power for the coupled case, applies to a strictly broader class of tests $\widehat{\test}_{\eps,\delta}$, as compared to the result~\eqref{eqn:power_optimal} (from  Theorem~\ref{thm:optimality}, for the case $\xi=\xi'$), or the result~\eqref{eqn:power_optimal_independent} (for the case $\xi\indep\xi'$). However, because $\delta^{\textnormal{cpl}}_{\eps}$ from Theorem~\ref{thm:optimality_general} is the smallest probability over all possible couplings of $(\xi,\xi')$, we therefore have $\delta^{\textnormal{cpl}}_{\eps}\leq \delta^*_\eps$ and $\delta^{\textnormal{cpl}}_{\eps}\leq\delta^{\indep}_\eps$, and thus the upper bound on power given in~\eqref{eqn:power_optimal_coupling} can be larger than the upper bound in~\eqref{eqn:power_optimal} (from Theorem~\ref{thm:optimality}, for the case $\xi = \xi'$) or in~\eqref{eqn:power_optimal_independent} (for the case $\xi \indep \xi'$). In this sense, the result~\eqref{eqn:power_optimal_coupling} is neither strictly stronger nor strictly weaker than either of the other two results.
\end{document}